\definecolor{Gray}{gray}{0.9}
\definecolor{LightBlue}{RGB}{221,235,247}
\definecolor{LightGreen}{RGB}{230,255,230}
\newcommand{\system}{{\textsc{\small{JigsawComm}}}\xspace}
\DeclareRobustCommand*\circled[1]{\tikz[baseline=(char.base)]{\node[shape=circle,draw,color=white,fill=black,inner sep=0.5pt] (char){#1};}}
\begin{document}

\title{JigsawComm: Joint Semantic Feature Encoding\\and Transmission for Communication-Efficient Cooperative Perception} 

\titlerunning{JigsawComm: Communication-Efficient Cooperative Perception}

\author{Chenyi Wang\inst{1} \and
Zhaowei Li\inst{2} \and
Ming F. Li\inst{1} \and
Wujie Wen\inst{2}
}


\institute{University of Arizona, Tucson AZ 85719, USA \and
North Carolina State University, Raleigh NC 27695, USA\\
\email{\{chenyiw,lim\}@arizona.edu}\quad
\email{\{zli223,wwen2\}@ncsu.edu}
}


\maketitle

\begin{abstract}
Multi-agent cooperative perception (CP) promises to overcome the inherent occlusion and range limitations of single-agent systems in autonomous driving, yet its practicality is severely constrained by limited Vehicle-to-Everything (V2X) communication bandwidth.
Existing approaches attempt to improve bandwidth efficiency via compression or heuristic message selection, but neglect the semantic relevance and cross-agent redundancy of the transmitted data.
In this paper, we formulate a joint semantic feature encoding and transmission problem that maximizes CP accuracy under a communication budget, and introduce \system, an end-to-end semantic-aware framework that learns to ``assemble the puzzle'' of multi-agent feature transmission.
\system uses a regularized encoder to extract \emph{sparse, semantically relevant features}, and a lightweight Feature Utility Estimator (FUE) to predict each agent's per-cell contribution to the downstream perception task.
The FUE-generated compact meta utility maps are exchanged among agents and used to compute an optimal transmission policy under the learned utility proxy. 
This policy inherently \emph{eliminates cross-agent redundancy}, bounding the feature transmission payload to $\mathcal{O}(1)$ as the number of agents grows, while the meta information overhead remains negligible. 
The whole pipeline is trained end-to-end through a differentiable scheduling module, 
informing the FUE to be aligned with the task objective.
On the OPV2V and DAIR-V2X benchmarks, \system reduces total data volume by over 20--500${\times}$  while matching or exceeding the accuracy of state-of-the-art methods.%
\end{abstract}

\section{Introduction}
\label{sec:intro}
Multi-agent cooperative perception (CP) enables connected road agents, such as connected and autonomous vehicles (CAVs) and road-side units (RSUs), to improve their understanding of the environment~\cite{cp_benefit}. By leveraging sensor data exchanged through Vehicle-to-Everything (V2X) communication~\cite{cpm}, participating agents can detect objects beyond their individual sensing ranges, minimize blind spots, and perceive occluded obstacles~\cite{xu2022opv2v}. With its great potential for facilitating more informed driving decisions and enhancing road safety, CP is a critical technology for 50--90\% of the vehicle market that CAVs are anticipated to comprise by 2040~\cite{Chatziioannou2024}.\looseness=-1

\begin{figure*}[t]
  \centering
   \includegraphics[width=1.0\linewidth]{./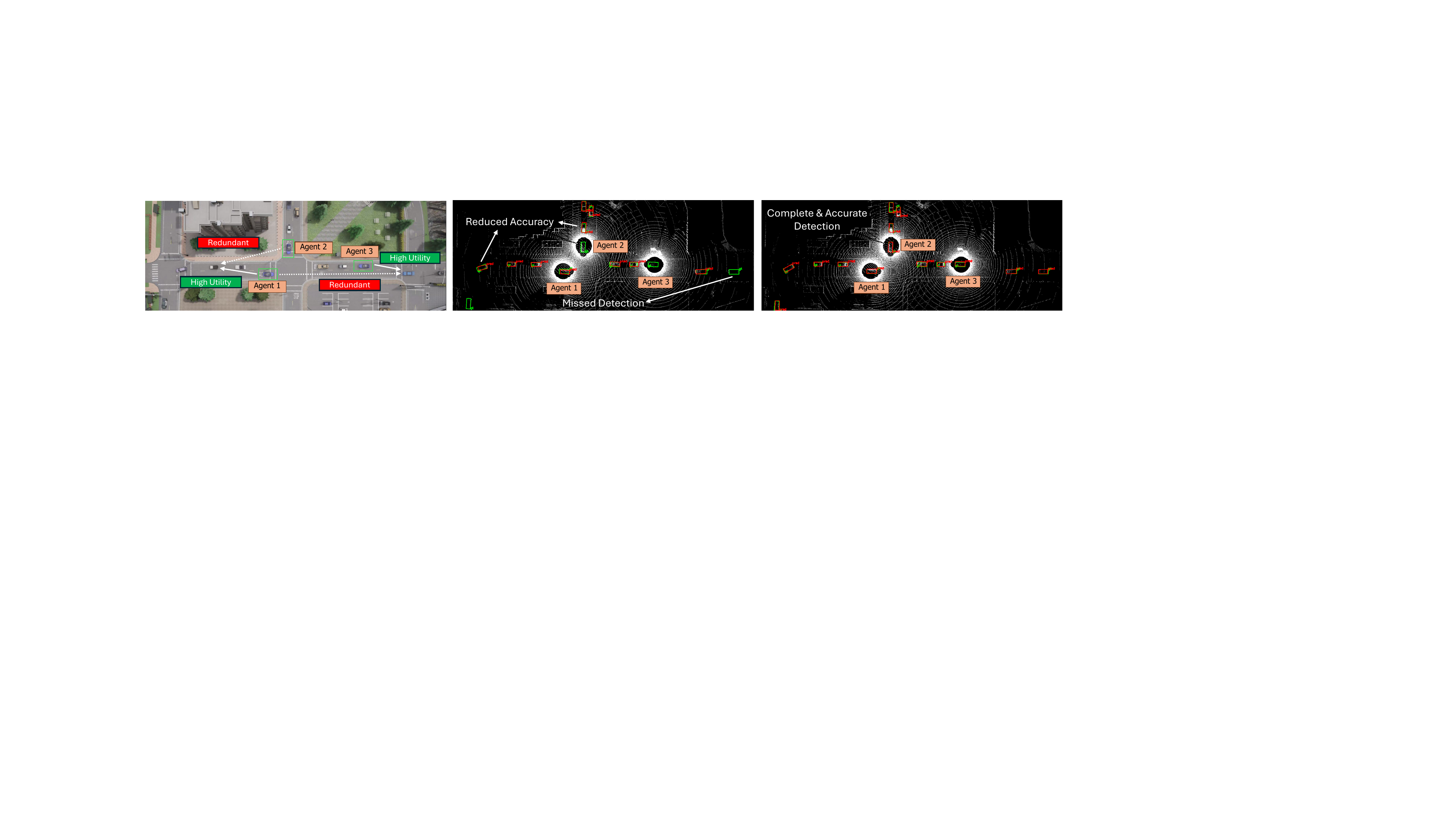}
   \caption{A sample scenario from the OPV2V dataset (left). Under the same bandwidth limit, prior work~\cite{hu2022where2comm} (middle) does not utilize bandwidth efficiently, leading to missed detections. \system (right) maximizes every bit's contribution, enabling complete and accurate detection. See Sec.~\ref{sec:eval} for quantitative evaluation and discussion on the ``cost'' of redundancy.}
   \label{fig:motivation}
   \vspace{-10pt}
\end{figure*}

Despite this promise, CP faces a formidable challenge in communication efficiency. The available V2X bandwidth is severely limited (up to $20$ MHz for both DSRC~\cite{FCC_DSRC} and C-V2X~\cite{CV2X_ETSI_EN_303_613}), while the volume of sensor data generated is prohibitively large (\eg 4 MB per LiDAR frame~\cite{f_cooper}). Exchanging raw sensor data is therefore infeasible while satisfying the stringent real-time perception requirements of autonomous driving (\eg $<$100 ms~\cite{ADConstraints}). As previous research also demonstrated that exchanging local detection results alone is suboptimal~\cite{xu2022opv2v}, the field has largely converged on the intermediate fusion framework—exchanging and fusing Bird's-Eye-View (BEV) features—as the most effective paradigm~\cite{xu2022opv2v,hu2022where2comm,xu2022v2x,f_cooper}. Even so, transmitting dense deep features from multiple agents still exhausts V2X bandwidth.

The key challenge stems from the high degree of data redundancy in the shared features, which takes two forms:
(i) \emph{intra-agent} redundancy, where features waste bits on semantically irrelevant information; and (ii) \emph{inter-agent} redundancy, where multiple agents transmit overlapping data about the same scene regions. As illustrated in Fig.~\ref{fig:motivation}, under limited bandwidth such inefficiency not only reduces detection coverage but can even degrade perception accuracy when lower-quality data dilutes higher-quality representations after fusion (Sec.~\ref{sec:ablation}).

Existing communication-efficient CP methods fail to address both forms simultaneously. They typically adopt one of two strategies: (i)~\emph{Compression}, using encoder-decoders~\cite{xu2022opv2v} or codebooks~\cite{hu2024communication, zhao2025quantv2x}, which reduce data size but still transmit the entire feature map or restrict representation power to offline-trained dictionaries; (ii)~\emph{Message selection}~\cite{liu2020when2com, hu2022where2comm, hu2024communication, xu2025cosdh}, choosing ``critical'' regions based on local heuristics (\eg detection confidence) that loosely correlate with the final cooperative task. Crucially, both approaches ignore cross-agent redundancy, so their communication cost still grows linearly with the number of agents.
\textit{The problem of designing a truly communication-efficient, scalable, and accurate CP system remains open.}\looseness=-1

We argue that a practical CP system must maximize the contribution of every transmitted bit to the final perception goal, transmitting only essential, semantically meaningful, and non-redundant information.
This requires \emph{jointly} designing the feature encoding and transmission selection, which is non-trivial: removing cross-agent redundancy is a ``chicken-and-egg'' problem, since an agent cannot determine whether its features are redundant without knowing what other agents intend to transmit.

To address these gaps, we propose \system, an end-to-end semantic-aware CP framework that learns to ``assemble the puzzle'' of multi-agent feature transmission. First, a regularized feature encoder extracts features that are both semantically important and sparse.
Second, a lightweight Feature Utility Estimator (FUE) predicts each agent's per-cell contribution to the final perception result, producing compact \emph{meta utility maps} that approximate the marginal value of each feature and expose cross-agent overlap. The entire framework is trained end-to-end via a differentiable scheduler that jointly optimizes the encoder and FUE. At inference, agents exchange meta utility maps and locally compute an explainable, deterministic transmission policy: select at most one agent with the highest utility per cell, then greedily admit cells until the communication budget is met. We prove this policy is optimal under the learned utility proxy (Theorems~\ref{thm:singleton} and~\ref{thm:greedy-equal-cost}) and validate its alignment with task accuracy empirically.

The top-1-per-cell rule inherently eliminates cross-agent duplication by construction, making the system practical and scalable: the final transmission is a sparse ``jigsaw'' assembled from the best cell-level features across agents, bounding the data-channel cost to $\mathcal{O}(1)$ with respect to the number of agents $N$, compared to $\mathcal{O}(N)$ for naive broadcast. Although exchanging the lightweight meta utility maps incurs $\mathcal{O}(N)$ signaling, this overhead is negligible in practice ($<$0.03\,KB per frame). The scheduler yields identical decisions on every agent, making it deployable in both centralized (V2I) and decentralized (V2V) modes.
Our contributions are:
\begin{itemize}
    \item We propose \system, a CP framework that jointly trains a sparse semantic encoder and an end-to-end differentiable scheduler to maximize each transmitted bit's contribution to perception accuracy.
    \item We introduce a learned, task-aligned utility proxy and a top-1-per-cell scheduling policy that eliminates cross-agent redundancy and bounds data-channel cost to $\mathcal{O}(1)$ with respect to the number of agents.
    \item We extensively evaluate \system on the OPV2V~\cite{xu2022opv2v} and DAIR-V2X~\cite{yu2022dair} benchmarks, demonstrating SOTA accuracy and 20--500${\times}$ bandwidth reduction compared with prior art (\eg Where2comm, NIPS'22~\cite{hu2022where2comm}; ERMVP, CVPR'24~\cite{Zhang_2024_CVPR}; CoST, ICCV'25~\cite{tang2025cost}).\footnote{\scriptsize Code available at {\scriptsize\url{https://github.com/WiSeR-Lab/JigsawComm}}.}\looseness=-1
    \item Our results uncover a counterintuitive `more is worse' effect in existing CP fusion backbones, showing that redundant multi-agent features can \emph{degrade} accuracy rather than improve it. We highlight the need for future CP fusion architectures that can effectively leverage, instead of suffer from, redundancy.
\end{itemize}

\section{Background and Related Work}
\subsection{Cooperative Perception Paradigms}\label{sec:related}
Research in CP has explored various strategies for fusing information from multiple agents, which are primarily categorized by the stage at which fusion occurs. Early fusion involves the transmission and aggregation of raw sensor data, such as LiDAR point clouds or camera images~\cite{zhang2021emp, zhang2023robust}. While this approach theoretically preserves the maximum amount of information, it is prohibitively expensive in terms of required communication bandwidth and is thus considered impractical for real-world V2X applications~\cite{f_cooper}. At the other extreme, late fusion operates on the final perception outputs, where agents directly exchange local perception results~\cite{rauch2012car2x}. This method is highly communication-efficient but suffers from significant information loss during local processing, making it difficult to resolve detection conflicts and susceptible to error propagation~\cite{xu2022opv2v}.

The dominant paradigm in modern CP is intermediate fusion, which strikes a balance between these two extremes~\cite{f_cooper,xu2022opv2v,hu2022where2comm, xu2022v2x,liu2020when2com, xu2025cosdh}. In this approach, agents first process their raw sensor data into an intermediate feature representation, typically within a shared and aligned BEV grid. These features are then exchanged and fused before being passed to a final detection head. This strategy preserves rich semantic and spatial information while being more communication-friendly than early fusion, forming the basis for most state-of-the-art methods, including \system proposed in this paper.
\subsection{Communication-Efficient CP Strategies}\label{sec:comm_efficient_cp}
Within the intermediate fusion paradigm, several previous works have focused on optimizing the trade-off between perception performance and communication cost. These efforts can be broadly classified by their core strategy.  

\vspace{2pt}\noindent\textbf{Message Representation.} This category compresses feature maps via encoder-decoders\cite{xu2022opv2v, xu2025cosdh, Zhang_2024_CVPR}, codebook quantization~\cite{hu2024communication, zhao2025quantv2x}, or generative reconstruction~\cite{Zhang_2024_CVPR, mao2025diffcp, zhou2025pragmatic}. Encoder-decoder approaches apply a bottleneck to reduce the feature dimensionality, achieving fixed compression ratios but lacking adaptability to varying scene complexity. Codebooks approximate continuous features with discrete indices matching to an offline trained dictionary, which may restrict the generalizability of the representation~\cite{hu2024communication, zhao2025quantv2x}. Diffusion-based methods achieve high accuracy-bandwidth tradeoff (\eg 0.86 mAP@0.5 under 80 Kbps~\cite{mao2025diffcp}) but the multi-step denoising violate real-time computation latency constraints for autonomous driving~\cite{ADConstraints}. Crucially, these approaches compress the \emph{entire} feature map uniformly without considering which regions are task-relevant, wasting bits on uninformative background.

\noindent\textbf{Message Selection.} This line of research aims to prune the information sent by agents, assuming that not all data is equally valuable. Early work like When2com~\cite{liu2020when2com} employed a GNN to select which agents to communicate with, performing a coarse, agent-level selection. However, as V2X communication primarily relies on wireless broadcast rather than direct pairwise communications~\cite{FCC_DSRC,CV2X_ETSI_EN_303_613}, optimizing communication graphs does not save bandwidth in practice. A more fine-grained approach was introduced by Where2comm~\cite{hu2022where2comm}, which has become the \textit{de facto} canonical baseline for communication-efficient CP, which generate a spatial confidence map by applying the \textit{detector head} on each agent's local feature. This map serves as a heuristic to identify `perceptually critical' regions for transmission, 
which achieves region-level sparsity, reducing bandwidth compared to sending dense feature maps. 
Follow-up work extends this idea with spatiotemporal fusion~\cite{tang2025cost} or intermediate-late hybrid fusion~\cite{xu2025cosdh}. However, these methods rely on local detection confidence that may not correlate well with the final CP accuracy and lack mechanisms to suppress inter-agent redundancy, resulting in $\mathcal{O}(N)$ communication scaling. Prior attempts at redundancy reduction either use geometry-based heuristics that ignore semantics~\cite{zhang2021emp, autocast,wang2024edge,yu2023flow} or mutual information metrics that are computationally impractical~\cite{su2024makes,wei2025infocom}.

\noindent\textbf{Positioning Our Work.} Existing communication-efficient CP methods share three unresolved shortcomings: reliance on heuristic utility proxies; no explicit modeling of semantic inter-agent redundancy; and static performance-bandwidth trade-offs. Our work addresses all three by introducing a semantic-aware, task-oriented, end-to-end learned scheduling policy that is adaptive to varying bandwidth. While \system shares the high-level goal of task-oriented compression with semantic/goal-oriented communications~\cite{strinati2021_6g,pezone2022goal}, existing work in that field targets single-source scenarios~\cite{yuan2023scalable,binucci2022adaptive} or neglects inter-sensor correlations~\cite{sheng2024semantic}. CP is inherently a \emph{multi-source} problem where inter-agent redundancy must be modeled without knowing the data correlation structure \emph{a priori}; \system solves this in a data-driven manner, with minimal overhead.
\begin{figure*}[t!]
  \centering
   \includegraphics[width=0.93\linewidth]{./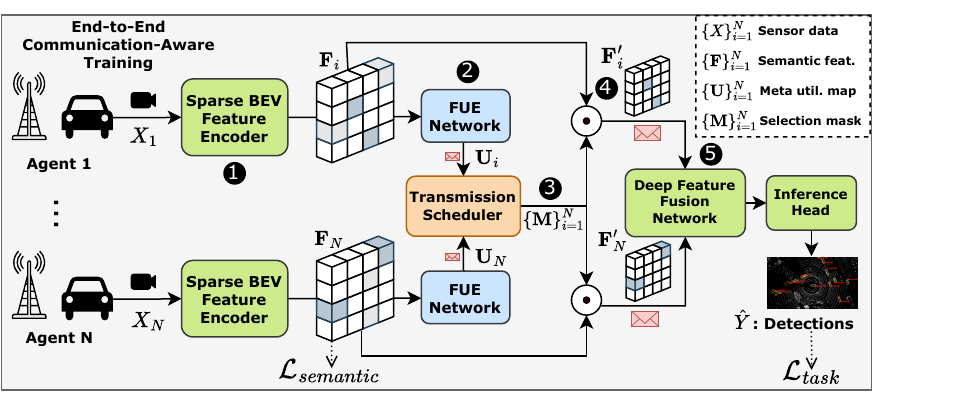}
   \caption{Overview of \system. Each agent encodes sparse BEV features, estimates a meta utility map via the FUE, exchanges utility maps, and applies the deterministic top-1-per-cell scheduler to select non-redundant features for transmission and fusion.}
   \label{fig:system}
\end{figure*}

\vspace{-10pt}
\section{Methodology}
\label{sec:method}
\vspace{-5pt}
\subsection{Problem Formulation}
We consider $N$ agents $\mathcal{A}=\{a_1,\dots,a_N\}$ collaborating via V2X\footnote{Here, agents can be CAVs or RSUs, within the same V2X communication range.}. Per cycle, agent $a_i$ obtains raw sensor data $X_i$ and uses an encoder $\Phi_{\text{enc}}$ to produce a BEV feature map
$\mathbf{F}_i=\Phi_{\text{enc}}(X_i)\in\mathbb{R}^{C\times H\times W}$ with cells
$\mathbf{f}_i^l\in\mathbb{R}^{C}$, $l\in\mathcal{L}=\{1,\dots,L\}$, $L{=}HW$.
After exchanging meta utility maps ${\mathbf{U}}_{i=1}^N$, each agent $a_i$ then transmits a sparse subset
$\mathbf{F}'_i=\mathbf{M}_i\odot \mathbf{F}_i$, where $\mathbf{M}_i \in \{0,1\}^{H\times W}$ and $\odot$ denotes element-wise multiplication. 
A fusion module $\Phi_{\text{fuse}}$ and decoder $\Phi_{\text{dec}}$ produce the output $\hat Y$. 
Our goal is to maximize the CP accuracy within a total communication budget $B$\footnote{$B$ equals the available bandwidth $BW$ divided by the frame rate $r$.}:
\begin{align}
\max_{\theta,\{\mathbf{M}_i\}_{i=1}^N}
& \mathcal{P}\big(\Phi_{\text{dec}}(\Phi_{\text{fuse}}(\{\mathbf{M}_i\odot\Phi_{\text{enc}}(X_i)\}_{i=1}^N)), Y\big)
\label{eq:perf-obj}
\\[-0.25em]
\text{s.t. }\;
& \sum_{i=1}^N |\mathbf{M}_i\odot\Phi_{\text{enc}}(X_i)| \le B.
\label{eq:budget}
\end{align}
Here,
$\mathcal{P}(\cdot)$ is the CP accuracy evaluation metric (\eg mAP), $Y$ corresponds to the ground truth supervision, $\theta$ denotes the trainable parameters for the entire network, and the binary masks $\{\mathbf{M}_i\}_{i=1}^N$ select subsets of the encoded features across spatial dimensions for transmission and fusion. During training, Eq.~\eqref{eq:perf-obj} is optimized jointly over the model parameters $\theta$ and the transmission masks $\{\mathbf{M}_i\}$ via the differentiable scheduler (Sec.~\ref{sec:transmission}); at inference, $\theta$ is fixed and only the scheduling masks $\{\mathbf{M}_i\}$ are optimized.\looseness=-1

\vspace{2pt}\noindent\textbf{Challenges.} This formulation presents three core challenges:
(\texttt{C1}) During inference, each agent must decide \emph{which} features to transmit without access to ground truth or full features from other agents (infeasible to transmit) to optimize for $\mathcal{P}(\cdot)$. Thus, a locally computable and compact utility proxy that correlates with CP accuracy and captures cross-agent redundancy is required. 
(\texttt{C2}) The binary scheduling masks $\mathbf{M}_i$ are non-differentiable, preventing direct end-to-end training of the utility proxy.
(\texttt{C3}) The feature selection under Eq.~\eqref{eq:budget} is a 0-1 knapsack problem, which is NP-Hard in general~\cite{kellerer2004multidimensional}.

\vspace{2pt}\noindent\textbf{System Overview.} \system learns to assemble the `jigsaw' of multi-agent BEV features so that the CP messages carry only essential, non-overlapping content. 
As shown in Fig.~\ref{fig:system}, \emph{each agent}: \circled{1} produces \emph{sparse and semantically important} features (Sec.~\ref{sec:feature-sparsity}); \circled{2} predicts a compact meta utility map using a Feature Utility Estimator (FUE) network, which is exchanged among agents efficiently (Sec.~\ref{sec:due}); \circled{3} locally applies a \emph{deterministic, redundancy-aware} top-1-per-cell policy, producing a feature selection mask that maximizes the total utility while satisfying the communication budget (Sec.~\ref{sec:transmission}); 
\circled{4} broadcasts the selected features; \circled{5} fuses the exchanged features and passes to the inference head for output. %
With the total per-frame data payload upper-bounded by the size of a single sparse semantic feature, the data-channel bandwidth scales as $\mathcal{O}(1)$ with respect to the number of agents, ensuring high scalability.
\vspace{-5pt}
\subsection{Sparse Semantic Feature Encoder}
\label{sec:feature-sparsity}
To ensure that features carry compact and \emph{essential} semantics using a minimal number of bits, we first shape the encoder's output to reduce intra-agent redundancy. We apply a semantic loss consisting of an $L_1$ regularization and a learnable threshold $\kappa$ on the encoded feature $\mathbf{f}_{i}$:
\begin{equation}
\mathcal{L}_{\text{semantic}}
=\frac{1}{NL}\sum_{i=1}^N\sum_{l=1}^L \big\| \mathbf{f}_{i}^l \odot \mathbb{I}\{\mathbf{f}_{i}^l>\kappa\} \big\|_{1}.
\label{eq:feature-l1}
\end{equation}
$L_1$ regularization is known to produce sparse coefficients and suppress irrelevant features~\cite{ng2004feature}. However, applying it directly to raw activations ($\|\mathbf{f}_i^l\|_1$) only encourages small magnitudes without yielding \emph{exact zeros}---uninformative activations linger at small but nonzero values, still consuming bits.
The learnable threshold $\kappa$ resolves this by explicitly zeroing out activations below $\kappa$, producing structurally sparse features with exact zeros that incur no transmission cost under sparse index encoding~\cite{hu2022where2comm}.
\vspace{-5pt}
\subsection{Feature Utility Estimator (FUE)}
\label{sec:due}
\noindent\textbf{Utility Proxy.} As established in task-oriented and semantic communication~\cite{qin2021semantic,strinati2021_6g}, higher CP accuracy correlates with transmitted data retaining more information about the final inference result. 
Therefore, to address (\texttt{C1}), we design a utility function $U(\cdot)$ that serves as a proxy for the downstream CP accuracy and can be computed locally from the agents' features. 
$U$ aggregates the individual feature contributions of all agents and penalizes pairwise redundancy. The utility at each spatial cell $l$ is defined as: 
\begin{equation}
U^l
=\underbrace{\sum_{i=1}^N m_i^l\, u_i^l}_{\text{importance}}
-\underbrace{\sum_{j\neq i} \frac{m_i^l m_j^l\, \tilde u_{ij}^l}{2}}_{\text{overlap penalty}},\\
\tilde u_{ij}^l \triangleq \min\{u_i^l, u_j^l\}.
\label{eq:percell}
\end{equation}
Here, $m_i^l\in\{0, 1\}$ is the binary transmission decision for $\mathbf{f}_i^l$ and $u_i^l$ is its estimated importance—the FUE predicted contribution of agent $i$'s feature at cell $l$ to the CP result. The term $\tilde{u}_{ij}^l$ conservatively upper-bounds the pairwise redundancy between agents $i$ and $j$, with the $\frac{1}{2}$ factor avoiding double-counting~\cite{cover1999elements}. Since features in the same BEV cell describe the same physical location; overlapping observations typically differ only in strength, 
making $\min(\cdot)$ a tractable upper bound on shared information. The frame-level utility is $U=\sum_{l=1}^{L} U^l$, which serves as a first-order approximation of task-relevant information by rewarding informative features and penalizing redundancy~\cite{cover1999elements}.

\vspace{2pt}\noindent\textbf{Scope of the Utility Proxy.}
We emphasize that $U$ is not an exact information-theoretic quantity but a \emph{learned, task-aligned intermediate representation} acting as an inductive bias that facilitates transmission scheduling: the FUE predicted utilities $u_i^l$ are trained end-to-end to correlate with CP accuracy in a data-driven fashion rather than derived from a closed-form mutual information expression, which is computationally infeasible~\cite{cover1999elements}. The theoretical guarantees in Sec.~\ref{sec:transmission} (Theorems~\ref{thm:singleton}--\ref{thm:greedy-equal-cost}) hold with respect to this learned proxy; their practical value depends on how well $U$ tracks the true task performance, which we validate empirically in Sec.~\ref{sec:eval}.

\vspace{2pt}\noindent\textbf{FUE Head.}
We implement the FUE as a single $1{\times}1$ convolution (pointwise linear decoder): $u_i^l = \mathrm{ReLU}\!\big(\mathbf{w}^\top \mathbf{f}_i^l + b\big)$, producing a sparse meta utility map $\mathbf{U}_i\in\mathbb{R}^{1\times H\times W}$.
Each cell's receptive field already spans its spatial neighborhood through the encoder's convolutional layers, so a pointwise head suffices.
The FUE is trained end-to-end through the differentiable scheduler (Sec.~\ref{sec:transmission}): the task loss gradient back-propagates through the scheduling decisions to the FUE weights, aligning higher utility values with higher transmission priority and greater contribution to accuracy.

\vspace{-5pt}
\subsection{Transmission Scheduler}\label{sec:transmission}

Even after transforming the optimization into maximizing the learned utility proxy, the challenges of non-differentiability in scheduling (\texttt{C2}) and the NP-hard combinatorial knapsack problem (\texttt{C3}) still exist. 
To solve these, we design a scheduling algorithm that is $(i)$ differentiable at training and $(ii)$ deterministic at inference, with a theoretically grounded optimality guarantee under the learned utility proxy for uniform-cost features.

After exchanging meta utility maps $\{\mathbf{U}_i\}_{i=1}^N$, the scheduler produces a feature selection mask $\mathbf{M}_i$ for each agent based on a straightforward and interpretable policy:
\begin{enumerate}
\item For each spatial cell $l$, select at most one agent with the highest non-zero utility $\max_{j}u_j^l$ to transmit.
\item From this set of candidates, greedily admit cells with the highest utility-to-cost ratio until the budget $B$ is met.
\end{enumerate}
This policy 
can be deterministically computed by all agents, making it practically compatible with both centralized (V2I) and decentralized (V2V) operation (Sec.~\ref{sec:discussion}). We next show that this policy is optimal for the learned utility proxy--\ie it maximizes $U$ among all feasible schedules, as formalized in Theorem~\ref{thm:singleton} and~\ref{thm:greedy-equal-cost}. 
\begin{figure}[t]
    \centering
    \includegraphics[width=0.95\columnwidth]{./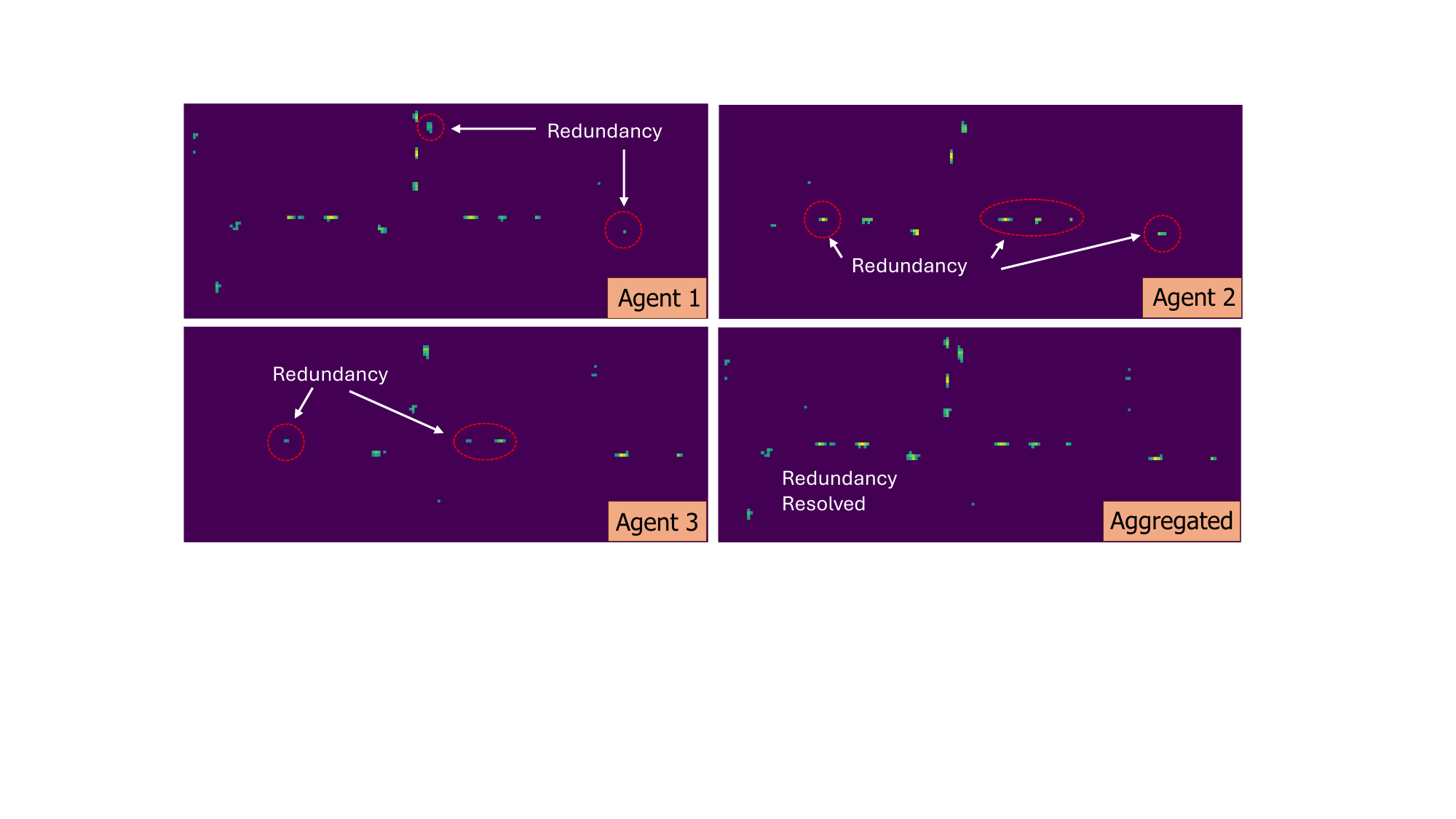}
    \caption{Visualization of meta utility map examples. The top-1 aggregated utility map eliminates redundancy by selecting the agent with the highest utility at each location. }
    \label{fig:util_vis}
\end{figure}

\begin{theorem}[Singleton optimality]
\label{thm:singleton}
For each cell $l$, under the utility proxy in Eq.~\eqref{eq:percell}, any maximizer of $U^l$ selects at most one agent:
$|\{i: m_i^l=1\}|\le 1$.
\end{theorem}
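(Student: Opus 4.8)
The plan is to prove singleton optimality at a single fixed cell $l$ by an exchange (greedy reduction) argument. Throughout I suppress the superscript $l$ and write $u_i=u_i^l\ge 0$, which is nonnegative because the FUE head ends in a ReLU, and $S=\{i:m_i=1\}$, so that the objective reads
\[
U(S)=\sum_{i\in S}u_i-\frac12\sum_{i\in S}\sum_{j\in S,\,j\neq i}\min\{u_i,u_j\}.
\]
The entire difficulty is the $\min$ buried inside the quadratic penalty; the key observation is that this $\min$ becomes trivial the moment we single out the \emph{smallest} selected utility.

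First I would establish a ``peel the minimum'' step. Suppose $|S|=k\ge 2$ and let $m\in S$ attain $u_m=\min_{i\in S}u_i$. Removing $m$ deletes from the double sum exactly the $2(k-1)$ ordered pairs that involve $m$, each contributing $\min\{u_m,u_j\}=u_m$ since $u_m$ is the minimum, so the penalty drops by $\tfrac12\cdot 2(k-1)u_m=(k-1)u_m$ while the importance term drops by $u_m$. Hence
\[
U(S)-U\big(S\setminus\{m\}\big)=u_m-(k-1)u_m=-(k-2)\,u_m\le 0 .
\]
Deleting a minimum-utility agent therefore never decreases $U$ once $k\ge 2$. Iterating strips $S$ down to a singleton without ever lowering the objective, which already shows that selecting at most one agent attains the optimum; equivalently $U(S)\le\max_{i\in S}u_i=U\big(\{\arg\max_{i\in S}u_i\}\big)$.

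Second, to upgrade this to the stated claim that \emph{any} maximizer selects at most one agent, I would sharpen the inequality via a closed form. The same bookkeeping applied to the sorted utilities $u_{(1)}\ge\cdots\ge u_{(k)}$ collapses the penalty to $\sum_{q\ge 2}(q-1)u_{(q)}$ and yields $U(S)=u_{(1)}-\sum_{q\ge 2}(q-2)\,u_{(q)}$. Every coefficient $(q-2)$ is strictly positive for $q\ge 3$, so whenever $S$ contains three or more agents with a positive third-largest utility the bound $U(S)<u_{(1)}$ is strict and $S$ cannot be optimal. The genuinely delicate case is $k=2$: there the $(q-2)$ coefficient vanishes and the penalty $\min\{u_i,u_j\}$ exactly cancels the second agent's contribution, so a pair containing a globally top agent \emph{ties} with the singleton. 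I expect this degenerate tie to be the main obstacle, and I would close it precisely as the inference rule does in Eq.~\eqref{eq:inference-mask}: adopt a deterministic tie-break (a fixed agent priority, or the generic assumption that the selected utilities are distinct and positive), under which the unique maximizer is $\{\arg\max_i u_i\}$ and $|\{i:m_i=1\}|\le 1$ follows.
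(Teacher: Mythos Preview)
Your argument is correct and, at its core, uses the same marginal/exchange idea as the paper: the paper orders agents by decreasing utility and shows that \emph{adding} the next agent never increases $U^l$ (gain $0$ for the second agent, $\le 0$ thereafter), whereas you show the dual statement that \emph{removing} the current minimum never decreases $U^l$. Both reductions rely on the same bookkeeping that the $\min$ against a smallest element is just that element.

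Where you go further is in deriving the closed form $U(S)=u_{(1)}-\sum_{q\ge 2}(q-2)u_{(q)}$ and in explicitly flagging the $k=2$ tie: any pair containing a top agent achieves the same value as the singleton, so the literal ``\emph{any} maximizer selects at most one agent'' needs a tie-break to hold. The paper's own proof has the identical issue (its second step reads ``adding agent $2$ changes utility by $0$'') but does not comment on it; your resolution via the deterministic tie-break used at inference is the right way to close the statement, and is more careful than what the paper provides.
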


\begin{proof}
Order agents by $u_1^l\ge u_2^l\ge\cdots$. Selecting agent $1$ yields $U^l=u_1^l$.
Adding agent $2$ changes utility by $u_2^l-\min(u_1^l,u_2^l)=0$.
Adding any $k\ge3$ to a set $\mathcal{S}$ that already contains agent $1$ and $2$ yields marginal gain
$
u_k^l-\sum_{j\in\mathcal{S}}\min(u_k^l,u_j^l)\le u_k^l-\min(u_k^l,u_2^l)=0.
$
\end{proof}

Theorem~\ref{thm:singleton} guarantees at most one agent per cell, inherently eliminating cross-agent redundancy (visualized in Fig.~\ref{fig:util_vis}). This reduces the joint agent-and-cell selection problem (\texttt{C3}) to selecting a subset of cells, each with a single pre-determined agent. Because every cell carries a $C$-channel feature vector plus an index, the per-cell cost is uniform for non-zero features~\cite{hu2022where2comm}, and the resulting uniform-cost knapsack is optimally solved by selecting the cells with the largest utilities~\cite{kellerer2004multidimensional}, as formalized in Theorem~\ref{thm:greedy-equal-cost}.

\begin{theorem}[Greedy optimality under equal costs]
\label{thm:greedy-equal-cost}
Suppose each feature $(i,l)$ has the same nonzero cost $C_i^l \equiv c > 0$. Let the per-frame budget be $B$ and define the integer capacity $K \triangleq \big\lfloor B/c \big\rfloor$. Then an optimal solution is obtained by selecting any $K$ features with the largest utilities $u_{i}^l$.
\end{theorem}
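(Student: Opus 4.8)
The plan is to reduce the budgeted selection to a pure cardinality constraint and then close with a one-line exchange argument that uses nonnegativity of the utilities. First I would observe that because every feature shares the identical cost $C_i^l \equiv c$, the budget constraint $\sum_{(i,l)\in\mathcal{S}} C_i^l \le B$ for any selected set $\mathcal{S}$ is equivalent to $|\mathcal{S}|\,c \le B$, i.e. $|\mathcal{S}| \le B/c$. Since $|\mathcal{S}|$ is a nonnegative integer, this is exactly $|\mathcal{S}| \le \lfloor B/c\rfloor = K$. Thus the original cost-constrained problem collapses to maximizing $\sum_{(i,l)\in\mathcal{S}} u_i^l$ subject to the single cardinality bound $|\mathcal{S}| \le K$.

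Next I would establish greedy optimality for this cardinality-constrained problem. Let $\mathcal{G}$ be the set obtained by taking $K$ features with the largest utilities (ties broken arbitrarily), and let $u_{(1)}\ge u_{(2)}\ge\cdots$ denote all utilities listed in nonincreasing order, so $\sum_{(i,l)\in\mathcal{G}} u_i^l = \sum_{r=1}^{K} u_{(r)}$. For any feasible $\mathcal{S}$ with $|\mathcal{S}| = m \le K$, the sum of its $m$ utilities cannot exceed the sum of the $m$ largest utilities overall, giving $\sum_{(i,l)\in\mathcal{S}} u_i^l \le \sum_{r=1}^{m} u_{(r)}$. Because the utilities are ReLU outputs and hence satisfy $u_i^l\ge 0$, appending the remaining nonnegative terms $u_{(m+1)},\dots,u_{(K)}$ only increases the right-hand side, so $\sum_{r=1}^{m} u_{(r)} \le \sum_{r=1}^{K} u_{(r)} = \sum_{(i,l)\in\mathcal{G}} u_i^l$. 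Chaining the two inequalities shows $\mathcal{G}$ attains the maximum, which proves optimality.

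I would then justify the word ``any'' in the statement: the only freedom in the greedy choice is how ties among equal utilities are resolved, and since the bound above depends only on the multiset of selected utility values, every tie-breaking yields the same objective $\sum_{r=1}^{K} u_{(r)}$ and is therefore equally optimal. Combined with Theorem~\ref{thm:singleton}, which guarantees at most one agent per cell so that the residual decision is purely over which cells to admit, this confirms that the inference-time rule of sorting by utility-to-cost ratio and admitting the longest feasible prefix coincides exactly with this greedy top-$K$ selection.

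I do not anticipate a genuine obstacle here; the entire content is the observation that equal costs convert the knapsack into a cardinality-constrained selection, after which nonnegativity of the ReLU utilities makes ``pick the $K$ largest'' provably optimal. The only points requiring care are the integrality step $B/c \mapsto \lfloor B/c\rfloor$ and the explicit reliance on $u_i^l \ge 0$; were negative utilities admissible one would instead select only the $\min\{K,\#\{u_i^l>0\}\}$ largest positive utilities, but the ReLU in the FUE head rules this out.
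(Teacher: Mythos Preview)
Your proposal is correct and follows essentially the same route as the paper: reduce the equal-cost budget to the cardinality constraint $|\mathcal{S}|\le K$, then use nonnegativity of the utilities to show that any top-$K$ selection dominates every feasible set. The only cosmetic difference is that the paper phrases the last step as a contradiction (sorting an alleged better optimum $\mathcal{O}$ and comparing entrywise to $\mathcal{G}$), whereas you give the same inequality chain directly; the paper also invokes Theorem~\ref{thm:singleton} up front to frame the candidate set, while you correctly note it only at the end since the statement of Theorem~\ref{thm:greedy-equal-cost} does not itself depend on it.
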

We provide the proof to Theorem~\ref{thm:greedy-equal-cost} in Appendix Sec.~\ref{supp:thm2}. We illustrate the incorporation of the transmission policy into the end-to-end training pipeline below, which is essential in aligning the FUE-learned utility representation with the CP accuracy objective. 

\vspace{5pt}\noindent\textbf{Train-Time Differentiable Scheduling.}\label{sec:train-relax} Addressing the non-differentiable scheduling challenge (\texttt{C2}), we relax the discrete selection during training. Instead, we model the mask $m_i^l$ as the composition of two gates:

\noindent\textbf{(1) Importance gate (scalar, per agent–cell).}
We use a logistic gate with temperature $\eta>0$: $\alpha_i^l \;=\; \sigma\!\Big(\frac{u_i^l-\tau}{\eta}\Big)$, 
where $\sigma(\cdot)$ is the sigmoid function and $\tau$ is a learnable threshold. We anneal $\eta\downarrow 0$ over training to make $\alpha_i^l$ near-binary.\looseness=-1

\noindent\textbf{(2) Cross-agent top-1 (categorical, per cell).}
To model the ``pick one agent per cell'' constraint differentiably, we apply the Gumbel--Softmax reparameterization~\cite{jang2016categorical} to the score vector $\mathbf{u}^l=(u_1^l,\dots,u_N^l)$ with independent Gumbel noise $g_i^l \sim \mathrm{Gumbel}(0,1)$ and temperature $\gamma>0$:
\begin{equation}
\boldsymbol{\beta}_i^l(\gamma) 
= \mathrm{softmax}_i\left(\frac{u_i^l + g_i^l}{\gamma}\right)
= \frac{\exp\left((u_i^l+g_i^l)/\gamma\right)}{\sum_{k=1}^N \exp\left((u_k^l+g_k^l)/\gamma\right)}.
\label{eq:gumbel-softmax}
\end{equation}
We combine these gates using a Straight-Through Estimator (STE)~\cite{liu24differentiable_comb}. In the forward pass, we apply the deterministic policy for stability:
\begin{equation}
m_i^l\_\text{fwd} = \mathbb{I}[u_i^l \ge \tau] \cdot \text{OneHot}\Big(i=\arg\max_k u_k^l\Big).
\end{equation}
In the backward pass, we compute gradients as if the mask was the soft, differentiable approximation $\tilde{m}_i^l= \alpha_i^l\boldsymbol{\beta}_i^l(\gamma)$. $\gamma$ follows the same annealing schedule as $\eta$.

\vspace{2pt}\noindent\textbf{Inference-time Deterministic Policy.} At inference, we apply the deterministic policy from the STE's forward pass. 
Each agent $a_i$ applies the thresholding $\tau$ on the meta utility map $\mathbf{U}_i$ and exchanges \emph{the sparse meta utility map}, on which the local scheduler applies the per-cell top-1 policy:
\begin{equation}
m_i^l=\mathbb{I}\!\big[u_i^l=\max_k u_k^l\big]\cdot \mathbb{I}[u_i^l\ge \tau].
\label{eq:inference-mask}
\end{equation}
Let $C_i^l$ be the cost (bytes) of sending $\mathbf{f}_i^l$, and $\rho_i^l=u_i^l/C_i^l$ the utility–cost ratio.
We sort all $(i,l)$ with $m_i^l=1$ by $\rho_i^l$ and admit the longest prefix whose cumulative cost $\le B$.

\begin{proposition}[Consistency of differentiable and deterministic schedulers]
\label{prop:consistency}
For any $(i,l)$ with $u_i^l\neq \tau$ and with no injected noise at inference, we have
\begin{equation}
\lim_{\eta\to 0^+}\sigma\!\Big(\frac{u_i^l-\tau}{\eta}\Big)=\mathbb{I}[u_i^l>\tau],
\label{eq:limits}
\end{equation}
\begin{equation}
    \lim_{\gamma\to 0^+}\boldsymbol{\beta}^l(\gamma)=\mathrm{OneHot}\!\Big(i=\arg\max_k u_k^l\Big).
\end{equation}
Therefore, we have
\begin{equation}
\lim_{\eta\to 0^+}\tilde{m}_i^l
=\mathbb{I}[u_i^l=\max_k u_k^l]\cdot \mathbb{I}[u_i^l>\tau]=m_i^l.
\end{equation}
\end{proposition}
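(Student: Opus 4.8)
The plan is to establish the proposition by first proving the two intermediate limits and then composing them, since the final mask $m_i^l$ is literally the product of the thresholding indicator and the top-1 one-hot selector. The whole argument is essentially a collection of elementary limits of the sigmoid and softmax functions under temperature annealing, so I would keep the exposition tight and point to the pointwise behavior of each gate in isolation.

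First I would handle the importance gate, Eq.~\eqref{eq:limits}. The key observation is that $\sigma\!\big((u_i^l-\tau)/\eta\big)$ depends only on the sign of $u_i^l-\tau$ in the limit. Writing $t = u_i^l - \tau$, I would note that for the hypothesis $u_i^l \neq \tau$ we have $t \neq 0$, so as $\eta \to 0^+$ the argument $t/\eta$ diverges to $+\infty$ when $t>0$ and to $-\infty$ when $t<0$; since $\sigma(x)\to 1$ as $x\to+\infty$ and $\sigma(x)\to 0$ as $x\to-\infty$, the limit equals $\mathbb{I}[u_i^l>\tau]$. This is where the $u_i^l\neq\tau$ assumption is essential: at $t=0$ the sigmoid is pinned at $1/2$ regardless of $\eta$ and the limit would not agree with any indicator.

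Next I would treat the cross-agent top-1 gate. With no injected noise at inference ($g_i^l=0$), $\boldsymbol{\beta}^l(\gamma)$ reduces to an ordinary softmax of $u_i^l/\gamma$. Assuming a unique maximizer $i^\star=\arg\max_k u_k^l$, I would factor out the largest exponential and observe that each ratio $\exp\big((u_k^l-u_{i^\star}^l)/\gamma\big)$ tends to $0$ for $k\neq i^\star$ and to $1$ for $k=i^\star$ as $\gamma\to 0^+$, so the softmax concentrates all mass on $i^\star$, giving $\mathrm{OneHot}(i=\arg\max_k u_k^l)$. I would flag that this step tacitly requires the argmax to be unique; under the paper's assumptions (continuous, learned utility scores, so ties occur on a measure-zero set) this holds almost surely, and I would state this genericity caveat rather than belabor the degenerate tie case.

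Finally I would compose the two results. Because $\tilde{m}_i^l=\alpha_i^l\,\boldsymbol{\beta}_i^l(\gamma)$ is a product of the two gates and both $\eta$ and $\gamma$ anneal on the same schedule, the limit of the product factors into the product of the limits (each factor converges to a finite value in $\{0,1\}$, so no indeterminate form arises). Multiplying $\mathbb{I}[u_i^l>\tau]$ by $\mathbb{I}[u_i^l=\max_k u_k^l]$ reproduces exactly the inference-time mask $m_i^l$ of Eq.~\eqref{eq:inference-mask}, which is what we wanted. The main obstacle here is not analytical difficulty but rather stating the hypotheses cleanly: the whole proof is routine once one commits to the non-degeneracy conditions ($u_i^l\neq\tau$ and a unique per-cell argmax), so the bulk of the care goes into justifying that these genericity assumptions are benign in the learned, continuous-valued setting rather than into any hard estimate.
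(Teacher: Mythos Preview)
Your proposal is correct and complete. The paper itself does not supply a proof of Proposition~\ref{prop:consistency}; it simply states the three limit identities and immediately moves on to interpret them, so there is no paper-side argument to compare against. Your treatment---separately analyzing the sigmoid gate via the sign of $u_i^l-\tau$, the noiseless softmax gate via factoring out the dominant exponential, and then taking the product of the two finite $\{0,1\}$-valued limits---is exactly the routine verification the paper leaves implicit. You are also right to flag the unique-argmax caveat: the statement as written presupposes a well-defined $\arg\max_k u_k^l$, and your genericity remark (ties have measure zero for continuous learned utilities) is the appropriate way to dispatch it without overclaiming.
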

Prop.~\ref{prop:consistency} indicates that the train-time policy converges pointwise to the inference policy.\looseness=-1 

\begin{figure}[t]
    \centering
    \includegraphics[width=0.95\columnwidth]{./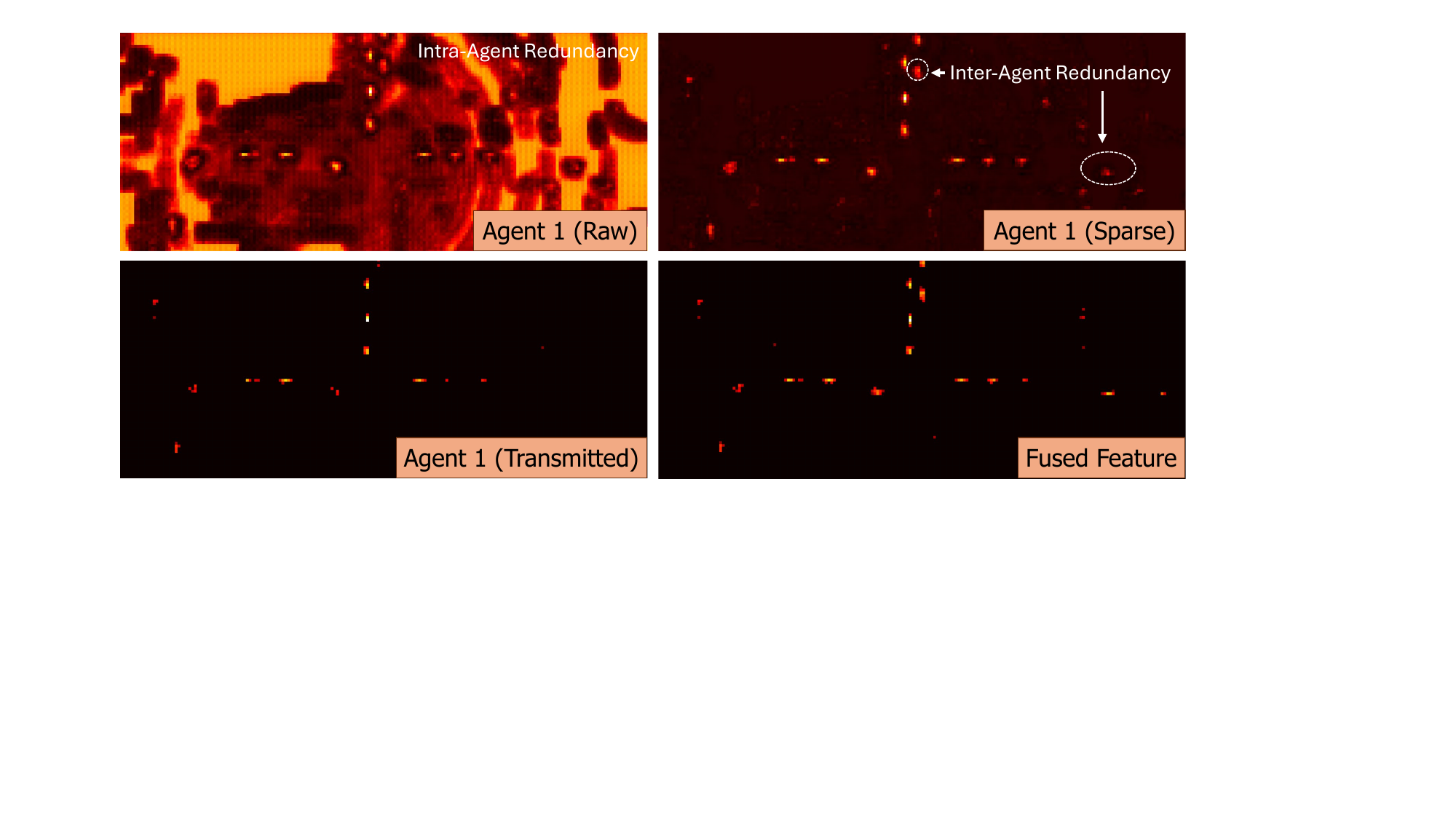}
    \caption{Feature map visualization. The semantic loss and learnable threshold $\kappa$ produce sparse features that concentrate activations on semantically relevant regions.}
    \label{fig:feature_vis}
\end{figure}

\vspace{-5pt}
\subsection{End-to-End Communication-Aware Training}
\label{sec:e2e}
We unify these components into a single end-to-end trainable framework. The final loss combines the downstream task loss (\eg focal loss) with the semantic loss:
\begin{equation}
\mathcal{L}_{\text{JigsawComm}} = \mathcal{L}_{\text{task}} + \lambda \mathcal{L}_{\text{semantic}},
\end{equation}
where $\lambda$ balances feature sparsity and task accuracy.
The $\mathcal{L}_{\text{task}}$ gradient flows through the differentiable scheduler (Sec.~\ref{sec:transmission}) to train the FUE head (Sec.~\ref{sec:due}), while $\mathcal{L}_{\text{semantic}}$ shapes the encoder (Sec.~\ref{sec:feature-sparsity}). By jointly learning the sparsity threshold $\kappa$ and utility threshold $\tau$, \system activates and transmits features only when they contribute to CP accuracy, as illustrated in Fig.~\ref{fig:feature_vis}. We note that a naive alternative---penalizing the transmission masks $\{\mathbf{M}_i\}$ directly---performs worse in practice because it penalizes all communication indiscriminately, disregarding semantic importance and redundancy.\looseness=-1

\vspace{-5pt}
\subsection{Robustness Enhancement}
\label{sec:alignment}
\textbf{Robustness to Misalignment.} The top-1-per-cell policy in Eq.~\eqref{eq:inference-mask} compares utilities $u_i^l$ across agents at each spatial cell~$l$. In practice, localization noise and communication delays introduce translational offsets between agents' coordinate frames~\cite{zhang2023robust}, causing the scheduler to compare misaligned cells. 
To address this, we introduce a lightweight alignment step that estimates and corrects offsets using only the already-exchanged meta utility maps $\{\mathbf{U}_i\}$, requiring no additional communication. Inspired by the feature matching and warping in~\cite{Zhang_2024_CVPR}, for each non-ego agent~$j$, we extract the top-$K$ cells by utility value from both $\mathbf{U}_{\text{ego}}$ and $\mathbf{U}_j$ as sparse keypoint sets and estimate a translational offset $\boldsymbol{\delta}_j$ via RANSAC~\cite{derpanis2010ransac}: in each iteration, we sample one point from each set, compute the candidate offset, and count inliers within $\varepsilon$ cells. The best hypothesis is refined by averaging inlier correspondences. Offsets with low confidence are discarded to avoid interpolation artifacts. 
Given the estimated offsets, each non-ego utility map $\mathbf{U}_j$ and features $\mathbf{F}_j$ are warped by $\boldsymbol{\delta}_j$ via bilinear grid sampling before scheduling and fusion.

\vspace{2pt}\noindent\textbf{Robustness to Packet Loss.} The top-1-per-cell policy also provides a natural fallback under packet loss: if a non-ego agent's scheduled feature is dropped during transmission, the ego vehicle uses its own local feature at the affected cell, requiring no additional communication. Because the ego's feature is always available and semantically meaningful in visible areas, this fallback degrades gracefully (Appendix~\ref{sec:packet-loss}) and guarantees the performance lower-bounded by local detection. Alternatively, a Top-$K$ policy could pre-select backup agents when more bandwidth is available, where the scheduler pre-selects $K$ candidate agents per cell and falls back to the next-best candidate upon packet loss.\looseness=-1 

\section{Evaluation}\label{sec:eval}

\begin{table}[t]\centering
\caption{Main results on OPV2V and DAIR-V2X. All methods transmit their full output (unlimited budget $B$). Rel.\ Eff.\ is the mAP@0.5-to-total-size ratio, normalized to \system ($=100\%$). \system achieves SOTA accuracy while reducing data volume by over $500{\times}$.}
\label{tab:main_results}
\setlength{\tabcolsep}{2pt}
\resizebox{\columnwidth}{!}{%
\scriptsize
\begin{tabular}{@{}l cc rr r cc rr r@{}}
\toprule
& \multicolumn{5}{c}{\textbf{OPV2V}} & \multicolumn{5}{c}{\textbf{DAIR-V2X}} \\
\cmidrule(lr){2-6} \cmidrule(lr){7-11}
\textbf{Method}
  & \textbf{mAP@.5} & \textbf{mAP@.7} & \textbf{Per-Ag.} & \textbf{Total} & \textbf{Rel.\ Eff.}
  & \textbf{mAP@.5} & \textbf{mAP@.7} & \textbf{Per-Ag.} & \textbf{Total} & \textbf{Rel.\ Eff.} \\
& & & \textbf{(KB)} & \textbf{(KB)} & \textbf{(\%)}
  & & & \textbf{(KB)} & \textbf{(KB)} & \textbf{(\%)} \\
\midrule
No Fusion
  & 0.68 & 0.60 & -- & -- & --
  & 0.50 & 0.44 & -- & -- & -- \\
F-Cooper~\cite{f_cooper}
  & 0.89 & 0.79 & 8,800 & 22,792 & 0.19
  & 0.56 & 0.39 & 16,384 & 32,768 & 0.70 \\
V2X-ViT~\cite{xu2022v2x}
  & 0.87 & 0.67 & 8,800 & 22,792 & 0.18
  & 0.68 & 0.50 & 16,384 & 32,768 & 0.84 \\
AttFusion~\cite{xu2022opv2v}\textsuperscript{$\ast$}
  & 0.91 & 0.82 & 8,800 & 22,792 & 0.19
  & 0.65 & 0.51 & 16,384 & 32,768 & 0.81 \\
\textsuperscript{$\ast$}\,+\,Enc-Dec
  & 0.89 & 0.81 & 4,400 & 11,396 & 0.38
  & 0.65 & 0.51 & 8,192 & 16,384 & 1.61 \\
Where2comm~\cite{hu2022where2comm}
  & 0.85 & 0.60 & 1,152 & 2,795 & 1.47
  & 0.59 & 0.46 & 6,295 & 12,590 & 1.91 \\
ERMVP~\cite{Zhang_2024_CVPR}
  & 0.89 & 0.79 & 845 & 2,189 & 1.97
  & 0.64 & 0.51 & 2,914 & 5,827 & 4.47 \\
CoST~\cite{tang2025cost}
  & 0.87 & 0.76 & 4,194 & 12,163 & 0.35
  & 0.62 & 0.47 & 4,194 & 8,389 & 3.01 \\
\rowcolor{gray!12}
\system
  & \textbf{0.92} & \textbf{0.82} & \textbf{17} & \textbf{44} & \textbf{100.00}
  & \textbf{0.69} & \textbf{0.52} & \textbf{149} & \textbf{282} & \textbf{100.00} \\
\bottomrule
\end{tabular}%
}
\vspace{-5pt}
\end{table}

\subsection{Experimental Setup}\label{sec:exp-setup}
\noindent\textbf{Datasets.} We evaluate \system on two CP benchmarks: (i)~OPV2V~\cite{xu2022opv2v}, a large-scale dataset with 11,464 LiDAR frames from 2--5 agents across 70 scenes in 8 CARLA~\cite{Dosovitskiy17Carla} towns; and (ii)~DAIR-V2X~\cite{yu2022dair}, the first large-scale real-world CP dataset for vehicle-infrastructure cooperation, containing 71,254 frames.

\vspace{2pt}\noindent\textbf{Implementation Details.} 
We adopt PointPillars~\cite{lang2019pointpillars} as the LiDAR BEV backbone and a max-out module as the fusion backbone~\cite{f_cooper}. 
We train \system for up to 100 epochs using Adam  ($\beta_1\!=\!0.9,\beta_2\!=\!0.999$), with a base learning rate of 0.002. The Gumbel-Softmax temperatures $\eta,\gamma$ are annealed from $0.9\to0.1$. \looseness=-1

\vspace{2pt}\noindent\textbf{Evaluation Metrics.} We evaluate 3D object detection performance using the mean Average Precision (mAP) at IoU thresholds of 0.5 and 0.7. Communication cost is measured by the Average Total Data Size (KB) transmitted per frame by all agents, under FP8. We also report Relative Communication Efficiency, defined as $\text{mAP}@0.5 / \text{Total Size}$, normalized to our method (100\%) for comparison. The required bandwidth is calculated as the total transmitted feature size multiplied by the frame rate ($r=10$ FPS).\looseness=-1

\vspace{2pt}\noindent\textbf{Baselines.} We follow the standard evaluation protocol in communication-efficient CP~\cite{xu2025cosdh,hu2024communication,mao2025diffcp,zhao2025quantv2x} and evaluate: (i)~a \emph{no-fusion} single-vehicle detector; (ii)~\emph{full-transmission} intermediate-fusion (F-Cooper~\cite{f_cooper}, V2X-ViT~\cite{xu2022v2x}, AttFusion~\cite{xu2022opv2v}); and (iii) \emph{communication-efficient} methods, including the canonical baseline Where2comm~\cite{hu2022where2comm} and prior art ERMVP (CVPR'24~\cite{Zhang_2024_CVPR}) and CoST (ICCV'25~\cite{tang2025cost}). All baselines are reproduced using their official codebases with the PointPillars backbone~\cite{lang2019pointpillars} for a fair comparison.

\vspace{-5pt}
\subsection{Quantitative Results}

\noindent\textbf{Main Results.} Table~\ref{tab:main_results} presents our main comparison.
On OPV2V, \system matches or exceeds all baselines in accuracy, including the best full-transmission method AttFusion, while using only 44\,KB per frame---a $520{\times}$ reduction from AttFusion's 22,792\,KB.
On the real-world DAIR-V2X dataset, \system achieves the highest mAP@0.5 (0.69), surpassing V2X-ViT (0.68) while requiring only 282\,KB ($116{\times}$ less data).
Among communication-efficient baselines, \system outperforms the strongest competitor ERMVP by 3 mAP points on OPV2V and 5 on DAIR-V2X, while using $50{\times}$ and $21{\times}$ less bandwidth, respectively, thanks to the joint encoder--scheduler design eliminating both intra- and inter-agent redundancy simultaneously.

\begin{figure}[t]
  \centering
  \begin{minipage}[t]{0.48\columnwidth}
    \centering
    \begin{subfigure}[t]{0.48\linewidth}
      \includegraphics[width=\linewidth]{./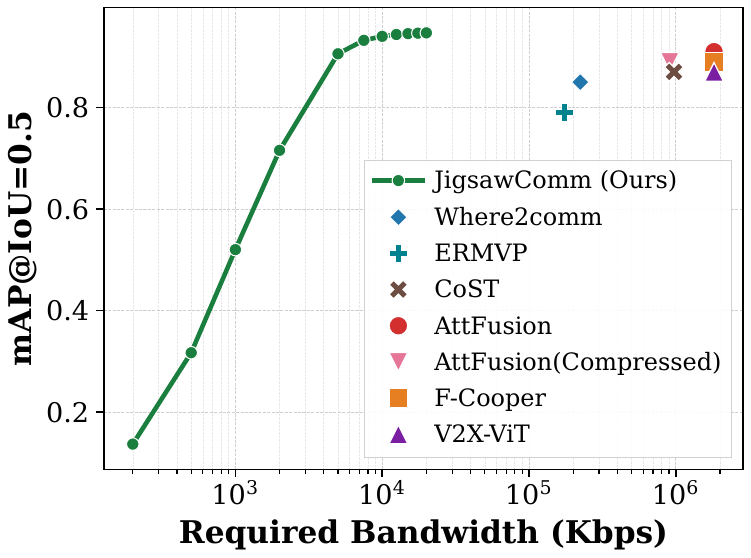}
      \caption{OPV2V}
      \label{fig:opv2v_tradeoff}
    \end{subfigure}\hfill
    \begin{subfigure}[t]{0.48\linewidth}
      \includegraphics[width=\linewidth]{./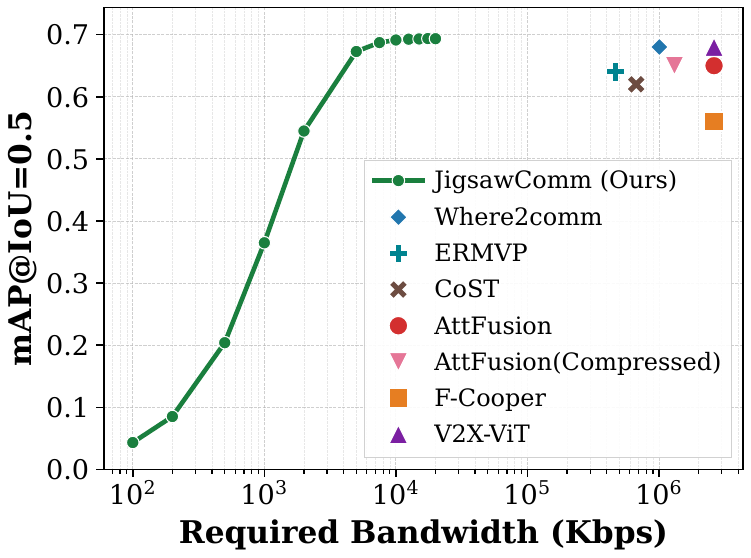}
      \caption{DAIR-V2X}
      \label{fig:dair_tradeoff}
    \end{subfigure}
    \captionof{figure}{Accuracy vs.\ required bandwidth. \system (curve) is flexible via $B$; baselines (points) are fixed.}
    \label{fig:tradeoff}
  \end{minipage}\hfill
  \begin{minipage}[t]{0.48\columnwidth}
    \centering
    \begin{subfigure}[t]{0.48\linewidth}
      \includegraphics[width=\linewidth]{./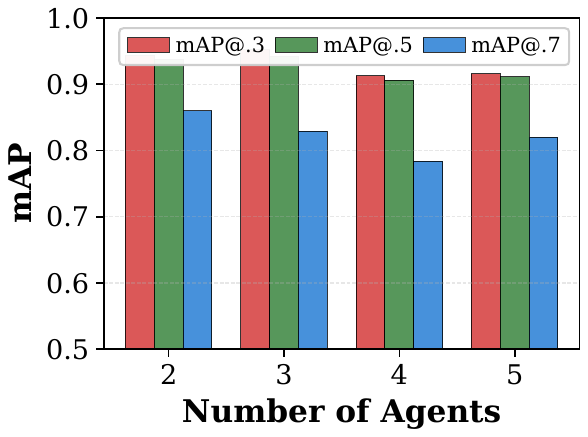}
      \caption{Accuracy}
      \label{fig:num_agent_acc}
    \end{subfigure}\hfill
    \begin{subfigure}[t]{0.48\linewidth}
      \includegraphics[width=\linewidth]{./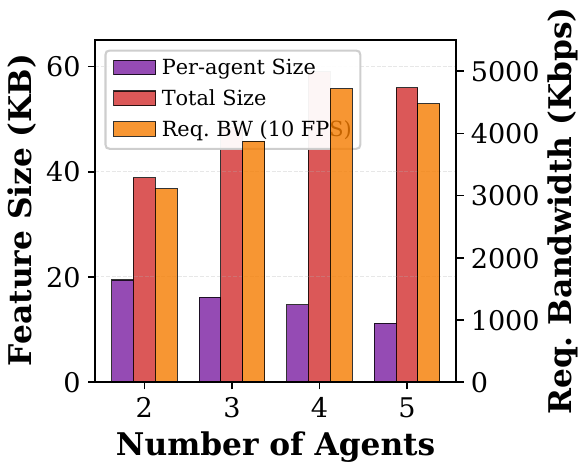}
      \caption{Feature size}
      \label{fig:num_agent_size}
    \end{subfigure}
    \captionof{figure}{Scalability on OPV2V by \#agents. \system bounds the payload to $\mathcal{O}(1)$.}
    \label{fig:num_agent}
  \end{minipage}
  \vspace{-5pt}
\end{figure}

\vspace{2pt}\noindent\textbf{Accuracy--Bandwidth Tradeoff.} Fig.~\ref{fig:tradeoff} plots \system's accuracy as a function of the available bandwidth. 
On OPV2V, \system reaches 0.90 mAP@0.5 at only ${\sim}$200\,Kbps and saturates beyond ${\sim}$1,000\,Kbps, well within the V2X capacity (${\leq}$20\,Mbps)\cite{FCC_DSRC, CV2X_ETSI_EN_303_613}. On DAIR-V2X, the curve saturates at a similarly low bandwidth, confirming that the learned utility proxy successfully identifies the small fraction of features that drive perception accuracy. This property is especially valuable for congested V2X channels, where the scheduler can gracefully degrade by transmitting only the highest-utility cells first. Note that a single trained model serves the entire curve; the budget $B$ acts as a simple ``knob'' at inference.

\vspace{2pt}\noindent\textbf{Scalability with Number of Agents.} Fig.~\ref{fig:num_agent} validates $\mathcal{O}(1)$ data-channel scaling on OPV2V. The total feature size grows only slightly from ${\approx}$40\,KB ($N{=}2$) to ${\approx}$58\,KB ($N{=}4$) due to increased coverage, while accuracy remains high. This sub-linear growth is a direct consequence of the top-1-per-cell policy (Theorem~\ref{thm:singleton}): as more agents observe the same region, the scheduler selects only the single best contributor, converting redundancy into bandwidth savings. 
Note that the OPV2V benchmark groups scenes by the number of participating agents rather than incrementally adding agents to a fixed scene. Consequently, differences in scene composition (object density, spatial layout) across groups can cause slight non-monotonicity in the per-group averages (\eg a small decrease from $N{=}4$ to $N{=}5$). The overall sub-linear trend nevertheless holds clearly across the range.
We further test on a custom dataset with 10--20 CAVs (Appendix~\ref{sec:scale}): the payload saturates beyond ${\sim}$15 agents, confirming $\mathcal{O}(1)$ scaling as inter-agent overlap dominates.\looseness=-1

\vspace{2pt}\noindent\textbf{End-to-End Efficiency.}
The average end-to-end latency is $45.82{\pm}26.96$\,ms, composed of $28.27{\pm}23.88$\,ms computation on a single RTX 4090 and $17.55{\pm}7.14$\,ms communication at 20\,Mbps, which is well within the 100\,ms real-time budget for autonomous driving~\cite{ADConstraints}. The meta utility map overhead from all agents is only $0.08{\pm}0.04$\,KB per frame, an average of less than $0.03$\,KB/agent under FP4.
\vspace{-5pt}
\subsection{Robustness Analysis}\label{sec:robustness}

Real-world V2X deployments are subject to imperfect localization and communication delays. We evaluate \system's robustness to both factors on the OPV2V dataset and compare against ERMVP~\cite{Zhang_2024_CVPR} and CoST~\cite{tang2025cost}. We further show \system's robustness against packet drops in Appendix~\ref{sec:packet-loss}.

\begin{figure}[t]
  \centering
  \begin{minipage}[t]{0.48\columnwidth}
    \centering
    \begin{subfigure}[t]{0.48\linewidth}
      \includegraphics[width=\linewidth]{./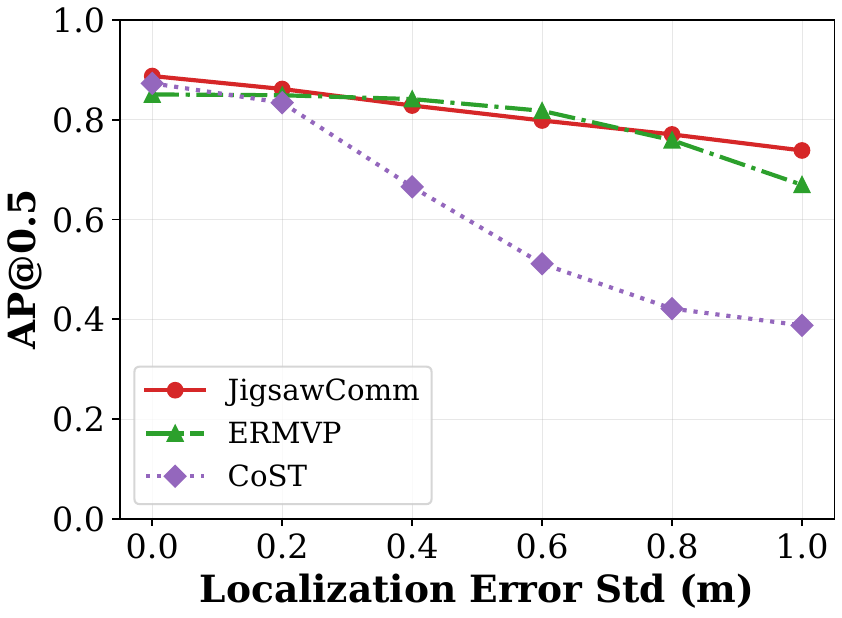}
      \caption{Localization error}
      \label{fig:loc_error}
    \end{subfigure}\hfill
    \begin{subfigure}[t]{0.48\linewidth}
      \includegraphics[width=\linewidth]{./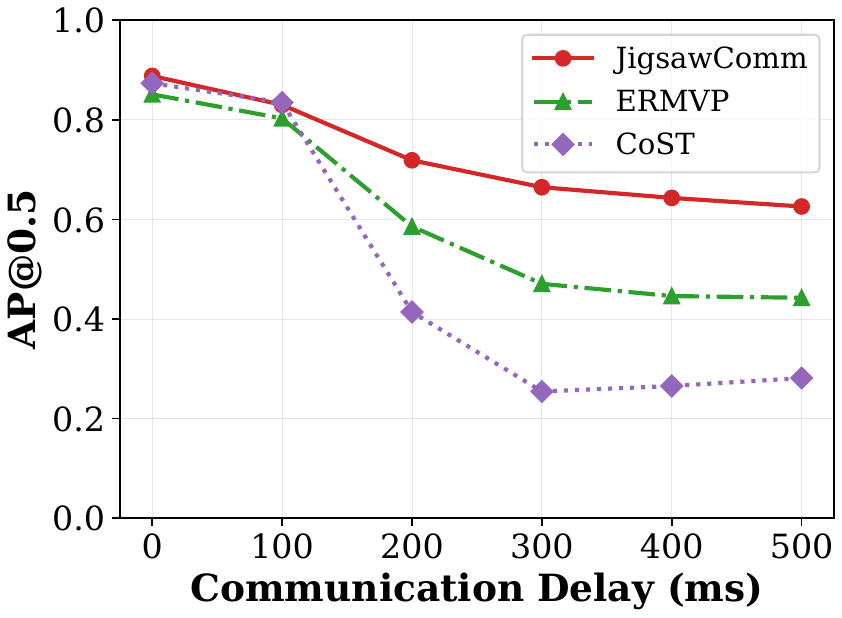}
      \caption{Delay}
      \label{fig:delay}
    \end{subfigure}
    \captionof{figure}{Robustness on OPV2V. \system degrades gracefully under localization noise and communication delay.}
    \label{fig:robustness}
  \end{minipage}\hfill
  \begin{minipage}[t]{0.48\columnwidth}
    \centering
    \begin{subfigure}[t]{0.48\linewidth}
      \includegraphics[width=\linewidth]{./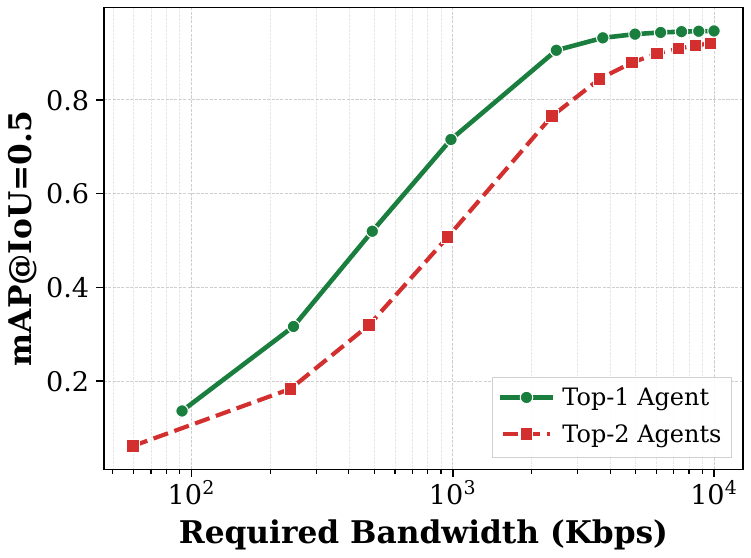}
      \caption{OPV2V}
      \label{fig:opv2v_2agent}
    \end{subfigure}\hfill
    \begin{subfigure}[t]{0.48\linewidth}
      \includegraphics[width=\linewidth]{./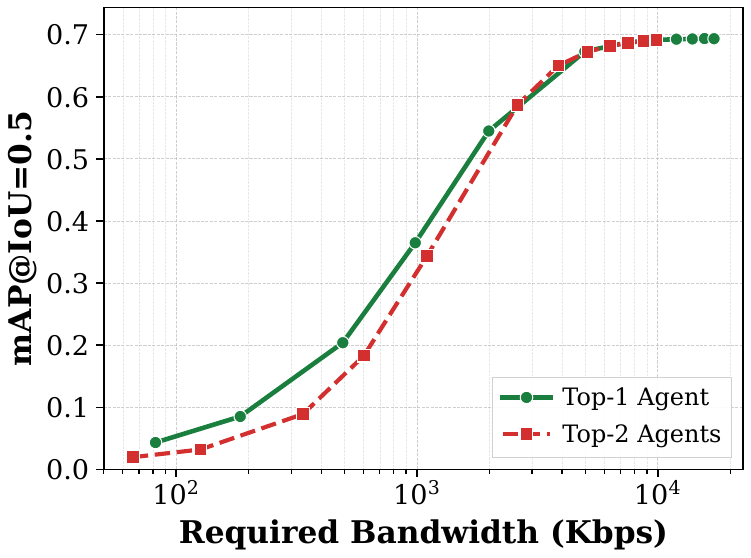}
      \caption{DAIR-V2X}
      \label{fig:dair_2agent}
    \end{subfigure}
    \captionof{figure}{Top-1 vs.\ Top-2 agents per cell. Redundant features degrade accuracy, corroborating Theorem~\ref{thm:singleton}.}
    \label{fig:2agent}
  \end{minipage}
  \vspace{-5pt}
\end{figure}

\vspace{2pt}\noindent\textbf{Localization Error.}
We inject Gaussian pose noise with $\sigma \in \{0, 0.2, \ldots, 1.0\}$\,m into each agent's reported position.
As shown in Fig.~\ref{fig:loc_error}, \system degrades gracefully from 0.90 ($\sigma{=}0$) to 0.74 ($\sigma{=}1.0$\,m), a relative drop of 18\%, thanks to the RANSAC-based utility map alignment (Sec.~\ref{sec:alignment}). In contrast, CoST drops from 0.87 to 0.39 (55\%). The resilience of \system stems from the fact that the sparse, high-utility cells used for RANSAC matching are predominantly located on or near objects, providing robust correspondences even under noise. At the practically relevant range of $\sigma{\le}0.4$\,m, \system's accuracy remains above 0.85.

\vspace{2pt}\noindent\textbf{Communication Delay.}
We simulate delays of $\{0, 100, \ldots, 500\}$\,ms by substituting each agent's current features with those from an earlier timestamp.
Fig.~\ref{fig:delay} shows \system degrades more gracefully: at 200\,ms it retains AP@0.5 of 0.72 (vs.\ 0.59 ERMVP, 0.41 CoST); at 500\,ms---equivalent to 5 missed frames at 10 FPS---\system still achieves 0.63, outperforming both baselines by a large margin. We attribute this resilience to two complementary effects: (i)~the scheduler selects only the most informative features, so the fused representation is less sensitive to the staleness of any individual contribution; and (ii)~the meta utility map and feature warping via RANSAC compensate for spatial misalignment induced by asynchronous transmission~\cite{zhang2023robust}.

\subsection{Ablation Study}\label{sec:ablation}

\begin{table}[t]
  \centering
  \caption{Ablation study of our core components.}
  \label{tab:ablation}
  \setlength{\tabcolsep}{3pt}
  \scriptsize
  \begin{tabular}{@{}cc ccc ccc@{}}
    \toprule
    & & \multicolumn{3}{c}{\textbf{OPV2V}} & \multicolumn{3}{c}{\textbf{DAIR-V2X}} \\
    \cmidrule(lr){3-5} \cmidrule(lr){6-8}
    $\mathcal{L}_\text{sem}$ & FUE & mAP@0.5 & Size (KB) & Rel. Eff. (\%) & mAP@0.5 & Size (KB) & Rel. Eff. (\%) \\
    \midrule
    \checkmark & -- & 0.93 & 26,292.4 & 0.17 & 0.55 & 22,670.6 & 0.99 \\
    -- & \checkmark & \textbf{0.94} & 13,516.8 & 0.33 & 0.61 & 7,375.0 & 3.38 \\
    \checkmark & \checkmark & 0.92 & \textbf{43.8} & 100 & \textbf{0.69} & \textbf{282.0} & 100 \\
    \bottomrule
  \end{tabular}
  \vspace{-5pt}
\end{table}

\noindent\textbf{Impact of Core Components.} Table~\ref{tab:ablation} ablates $\mathcal{L}_{\text{semantic}}$ and the FUE \& Scheduler. Using $\mathcal{L}_{\text{semantic}}$ alone produces sparse features but transmits them in full, yielding 26,292\,KB on OPV2V because inter-agent redundancy is unaddressed. Using the FUE alone selects features intelligently but operates on dense representations (13,517\,KB). Only when both are combined does transmission drop to 44\,KB while maintaining high accuracy: $\mathcal{L}_{\text{semantic}}$ ensures each cell carries only task-relevant content (intra-agent), while the scheduler selects only the best contributor per cell (inter-agent). On DAIR-V2X, the synergy is even more pronounced: the full system achieves 0.69 mAP@0.5---substantially higher than either component alone (0.55 or 0.61)---because the real-world dataset's viewpoint diversity makes both forms of redundancy elimination jointly beneficial.

\vspace{2pt}\noindent\textbf{Effect of Redundancy.} Theorem~\ref{thm:singleton} predicts that selecting more than the Top-1 agent per cell is suboptimal under our utility proxy. Fig.~\ref{fig:2agent} validates this empirically: a Top-2 policy that transmits features from the two highest-utility agents per cell is worse than Top-1 at nearly every bandwidth point on both datasets. This ``more is worse'' effect stems from a limitation of current fusion algorithms dominantly adopted by SOTA CP algorithms (\eg max-pooling, attention~\cite{xu2022opv2v, hu2022where2comm, f_cooper}). \emph{The fused feature can be expressed as a weighted sum of the features from different agents}. If the second-best agent's feature is noisier or semantically inconsistent, it dilutes the clean, high-utility representation rather than reinforcing it. This corroborates \system's design of selecting only the highest-quality, non-redundant features.

\vspace{-4pt}
\section{Discussion}\label{sec:discussion}
\vspace{-4pt}
\noindent\textbf{Practical V2X Deployment.} \system maps naturally onto the established dual-channel V2X architecture~\cite{FCC_DSRC, CV2X_ETSI_EN_303_613}: meta utility maps on the control channel, sparse BEV features on the data channel via the Collective Perception Message~\cite{cpm}. The scheduler's utility-ranked priorities integrate directly into both scheduling-based (\eg C-V2X~\cite{CV2X_ETSI_EN_303_613}) and contention-based protocols (\eg DSRC~\cite{FCC_DSRC}), supporting centralized (V2I) and decentralized (V2V) modes for participants in the same communication range.

\noindent\textbf{Limitations.}
The optimality transmission guarantees (Theorems~\ref{thm:singleton}--\ref{thm:greedy-equal-cost}) hold under the learned utility proxy as a data-driven intermediate representation, not the true task-level objective (which is unavailable during inference for optimizing the transmission decision); their practical value thus depends on how well the proxy tracks perception accuracy (validated empirically in Sec.~\ref{sec:eval}). 
The current design assumes a shared BEV backbone across agents; heterogeneous architectures would require feature alignment before utility comparison~\cite{lu2024heal}.
Finally, the ``more is worse'' finding (Sec.~\ref{sec:ablation}) highlights a limitation of existing fusion modules~\cite{f_cooper,xu2022opv2v,xu2022v2x,hu2022where2comm}; developing fusion architectures that can leverage multi-agent redundancy synergistically remains an open problem.\looseness=-1
\vspace{-4pt}
\section{Conclusion and Future Work}
\vspace{-4pt}
\noindent\textbf{Conclusion.}
We present \system, a joint semantic feature encoding and transmission framework for cooperative perception that maximizes every transmitted bit's contribution to the downstream task. By jointly training a sparse semantic encoder and a lightweight Feature Utility Estimator via a differentiable scheduler, \system extracts and selects only essential, non-redundant data. The resulting top-1 scheduling policy---optimal under the learned utility proxy---eliminates cross-agent redundancy and bounds the data-channel cost to $\mathcal{O}(1)$ with respect to the number of agents. Experiments on OPV2V and DAIR-V2X demonstrate state-of-the-art accuracy with over $20$--$500{\times}$ bandwidth reduction and strong robustness to localization noise and communication delay.\looseness=-1

\vspace{2pt}\noindent\textbf{Future Work.}
Several promising directions remain.
First, the sparse features produced by \system are amenable to entropy coding, which could further reduce the transmitted payload.
Second, extending the framework to joint source-channel coding would enable end-to-end optimization over unreliable wireless channels rather than treating compression and transmission separately.
Third, adapting the framework to camera-based BEV features and multi-modal (LiDAR + camera) fusion would broaden its applicability to the heterogeneous sensor suites.


%
%
\bibliographystyle{splncs04}
\bibliography{main}

\clearpage
\setcounter{page}{1}

\title{Supplementary Material}
\author{}
\institute{}
\maketitle


\appendix

\section{Proof to Theorem~\ref{thm:greedy-equal-cost}}\label{supp:thm2}

\begin{proof}
By Theorem~\ref{thm:singleton}, each cell yields at most one admissible candidate; thus the global problem is
$\max_{\mathcal{S}\subseteq \mathcal{C}}
\ \sum_{(i,l)\in \mathcal{S}} u_{i}^l
\;\text{s.t. }\ |\mathcal{S}| \le K,$
where $\mathcal{C}$ is the set of at-most-one-per-cell candidates and all costs equal $c$.
This is a \emph{cardinality-constrained} selection problem.
Let $\mathcal{G}$ be the greedy solution: the $K$ items with the largest utilities.
Assume for contradiction that $\mathcal{G}$ is not optimal. Let $\mathcal{O}$ be an optimal solution with $|\mathcal{O}| \le K$ and $\sum_{o\in\mathcal{O}} u(o) > \sum_{g\in\mathcal{G}} u(g)$.
If $|\mathcal{O}|<K$, augment $\mathcal{O}$ by adding the highest-utility items not in $\mathcal{O}$ until its size reaches $K$; this cannot decrease its objective value because utilities are nonnegative, so w.l.o.g.\ take $|\mathcal{O}|=K$.
Order items so that
$u(g_1)\ge \cdots \ge u(g_K)$ and $u(o_1)\ge \cdots \ge u(o_K)$.
Since $\mathcal{G}$ contains the top-$K$ utilities overall,
$u(g_j)\ge u(o_j)$ for every $j=1,\dots,K$.
Summing yields
$\sum_{j} u(g_j) \ge \sum_{j} u(o_j)$,
a contradiction. Hence $\mathcal{G}$ is optimal.
\end{proof}


\begin{table*}[t]
\centering
\caption{Detailed architectural specifications for \system.
$N$ is the number of cooperative agents in a scene, $P$ the number of non-empty pillars, and $A$ the number of anchors per cell.
With voxel resolution $0.4$\,m and the default LiDAR range, the initial BEV grid is
$H_0{\times}W_0{=}200{\times}704$; after the multi-scale backbone the feature map is
$H{\times}W{=}100{\times}352$.}
\label{tab:arch-specs}
\small
\setlength{\tabcolsep}{4pt}
\renewcommand{\arraystretch}{1.10}
\resizebox{\columnwidth}{!}{ 
\begin{tabular}{@{}l c l@{}}
\toprule
 & Output size & \system\ framework \\
\midrule
\makecell[tl]{Pillar VFE}
 & $P \times 64$
 & $\left.\begin{tabular}{@{}l@{}}
    \texttt{Feature augment: cluster \& center offsets, 10-d} \\
    \texttt{Linear, 64, BN1d, ReLU} \\
    \texttt{MaxPool over points}
   \end{tabular}\right\} \times 1$ \\
\midrule
Pillar Scatter
 & $N \times 64 \times H_0 \times W_0$
 & \texttt{Scatter pillar features to BEV grid} \\
\midrule
\makecell[tl]{BEV Backbone\\(encoder)}
 & $N \times 256 \times \frac{H_0}{8} \times \frac{W_0}{8}$
 & $\left.\begin{tabular}{@{}l@{}}
    \texttt{Conv3$\times$3, 64, stride\,2, BN, ReLU} \\
    \texttt{Conv3$\times$3, 64, stride\,1, BN, ReLU}
   \end{tabular}\right\} \times 3$ \\
 & &  $\left.\begin{tabular}{@{}l@{}}
    \texttt{Conv3$\times$3, 128, stride\,2, BN, ReLU} \\
    \texttt{Conv3$\times$3, 128, stride\,1, BN, ReLU}
   \end{tabular}\right\} \times 5$ \\
 & &  $\left.\begin{tabular}{@{}l@{}}
    \texttt{Conv3$\times$3, 256, stride\,2, BN, ReLU} \\
    \texttt{Conv3$\times$3, 256, stride\,1, BN, ReLU}
   \end{tabular}\right\} \times 8$ \\[2pt]
\cmidrule{1-3}
\makecell[tl]{BEV Backbone\\(decoder)}
 & $N \times 384 \times H \times W$
 & $\begin{tabular}{@{}l@{}}
    \texttt{ConvT1$\times$1, 128, stride\,1, BN, ReLU} \\
    \texttt{ConvT2$\times$2, 128, stride\,2, BN, ReLU} \\
    \texttt{ConvT4$\times$4, 128, stride\,4, BN, ReLU} \\
    \texttt{Concat3, 384}
   \end{tabular}$\\
\midrule
Sparse Thresholding
 & $N \times 384 \times H \times W$
 & $\mathbf{F}_i \leftarrow \mathbf{F}_i \odot \mathbb{I}[\mathbf{F}_i > \kappa]$\,,\;\; $\kappa$ learnable \\
\midrule
FUE Head
 & $N \times 1 \times H \times W$
 & \texttt{Conv1$\times$1, 1} \\
\midrule
\makecell[tl]{Transmission\\Scheduler}
 & $N \times 1 \times H \times W$
 & $\begin{tabular}{@{}l@{}}
    Train:\; $\tilde{m}_i^l = \sigma\!\big(\tfrac{u_i^l-\tau}{\eta}\big)\cdot\boldsymbol{\beta}_i^l(\gamma)$ \;\textit{(STE)} \\[1pt]
    Infer:\; $m_i^l = \mathbb{I}[i{=}\arg\max_k u_k^l]\cdot\mathbb{I}[u_i^l{\ge}\tau]$ \\[1pt]
    Budget:\; greedy top-$K$ by $u_i^l$
   \end{tabular}$ \\
\midrule
Fusion
 & $1 \times 384 \times H \times W$
 & \texttt{Element-wise max over $N$ agents} \\
\midrule
\makecell[tl]{Detection\\Head}
 & $H \times W \times A$
 & $\begin{tabular}{@{}l@{}}
    Cls:\;\texttt{Conv1$\times$1, $A$, stride\,1} \\
    Reg:\;\texttt{Conv1$\times$1, $7A$, stride\,1}
   \end{tabular}$ \\
\bottomrule
\end{tabular}
}
\end{table*}

\section{Architectural Configurations}
\label{sec:arch-config}

Given the definitions in Sec.~\ref{sec:method}, the entire \system\ pipeline can be concisely formulated as follows:
\begin{align}
    \mathbf{z}_i &= \mathrm{PillarVFE}(X_i),
    & \mathbf{z}_i &\in \mathbb{R}^{P_i \times 64},
    \label{eq:arch-vfe}  &\text{for agent } i\\
    \mathbf{F}_i &= \Phi_{\text{enc}}\!\big(\mathrm{Scatter}(\mathbf{z}_i)\big),
    & \mathbf{F}_i &\in \mathbb{R}^{C \times H \times W},
    \label{eq:arch-enc} \\
    \mathbf{F}_i &\leftarrow \mathbf{F}_i \odot \mathbb{I}[\mathbf{F}_i > \kappa],
    \label{eq:arch-thresh} \\
    u_i^l &= \mathrm{ReLU}\!\big(\mathbf{w}^{\top}\mathbf{f}_i^l + b\big),
    & \mathbf{U}_i &\in \mathbb{R}^{1 \times H \times W},
    \label{eq:arch-fue} \\
    m_i^l &= \mathrm{Sched}\!\big(\{u_j^l\}_{j=1}^N,\, \tau,\, B\big),
    & \mathbf{M}_i &\in \{0,1\}^{H \times W},
    \label{eq:arch-sched} \\
    \hat{Y} &= \Phi_{\text{dec}}\!\Big(\Phi_{\text{fuse}}\!\big(\{\mathbf{M}_i \odot \mathbf{F}_i\}_{i=1}^{N}\big)\Big),
    \label{eq:arch-fuse}
\end{align}
where $X_i$ denotes the raw LiDAR point cloud captured by agent $a_i$.
These points are voxelized and fed into the Pillar VFE~\cite{lang2019pointpillars}, yielding $P_i$ pillar features $\mathbf{z}_i$ of dimension $64$.
The pillars are scattered onto a 2D BEV grid of size $H_0 \times W_0$ and processed by the multi-scale BEV backbone encoder $\Phi_{\text{enc}}$, which consists of three downsampling blocks with $3$, $5$, and $8$ convolutional layers at $64$, $128$, and $256$ channels respectively (each with stride-2 entry, BatchNorm, and ReLU), followed by three transposed-convolution upsampling blocks whose outputs are concatenated to produce $C{=}384$-channel features $\mathbf{F}_i$ at half the input spatial resolution ($H{=}H_0/2$, $W{=}W_0/2$).

The sparse thresholding (Eq.~\ref{eq:arch-thresh}) zeros out activations below the learnable threshold $\kappa$, producing compact features with exact structural sparsity.
The FUE head (Eq.~\ref{eq:arch-fue}) is a single $1{\times}1$ convolution that maps each $C$-dimensional feature cell to a scalar utility $u_i^l$, forming the meta utility map $\mathbf{U}_i$.
After exchanging the sparse utility maps among agents, the transmission scheduler (Eq.~\ref{eq:arch-sched}) deterministically computes the binary mask $\mathbf{M}_i$ via the per-cell top-1 selection, threshold $\tau$, and budget constraint $B$ (Sec.~\ref{sec:transmission}).
The masked features $\mathbf{M}_i \odot \mathbf{F}_i$ are fused via element-wise max pooling across agents, and the fused representation is passed through two $1{\times}1$ convolution heads for classification ($A$ anchors) and bounding-box regression ($7A$ outputs).

The detailed layer-by-layer specifications are given in Table~\ref{tab:arch-specs}.
Notably, \system\ introduces two lightweight additions beyond the standard PointPillar backbone~\cite{lang2019pointpillars}: (i)~the FUE head ($C{+}1 = 385$ parameters) and (ii)~two learnable scalar thresholds $\kappa$ and $\tau$.
The transmission scheduler is purely algorithmic with no learnable parameters. 
The additional parameter overhead is therefore negligible ($<0.01\%$ of total model parameters).


\section{Training Procedure}
\label{sec:training-procedure}

\vspace{2pt}\noindent\textbf{Optimization.}
We use Adam~\cite{kingma2014adam} with initial learning rate $2{\times}10^{-3}$, weight decay $10^{-4}$, and $\epsilon{=}10^{-10}$.
The learning rate is decayed by a factor of $0.1$ at epochs $15$ and $30$ using a multi-step schedule.
All models are trained for $50$ epochs with batch size $4$ on a single NVIDIA RTX 4090 GPU.
We save checkpoints every $2$ epochs and select the model with the lowest validation loss. We select $\lambda=1000$ for OPV2V and $\lambda=100$ for DAIR-V2X, through standard cross validation.

\vspace{2pt}\noindent\textbf{Temperature Annealing.}
The Gumbel--Softmax temperature $\gamma$ and importance-gate temperature $\eta$ share the same annealing schedule: both are initialized to $0.9$ and multiplied by a decay factor of $0.9$ each epoch, yielding $\eta_t = 0.9^{t+1}$ at epoch $t$.
The temperature is clamped at a minimum of $0.01$ to maintain numerical stability.
As $\eta \to 0$, the differentiable masks converge to the deterministic inference-time policy (Prop.~\ref{prop:consistency}).

\vspace{2pt}\noindent\textbf{Data Augmentation.}
During training, we apply three standard augmentations to the global point cloud:
(i)~random flip along the $x$-axis,
(ii)~random rotation uniformly sampled from $[-\pi/4, \pi/4]$, and
(iii)~random scaling with a factor uniformly sampled from $[0.95, 1.05]$.

\vspace{2pt}\noindent\textbf{Bandwidth Control.}
During training, no explicit bandwidth constraint is imposed ($B{=}\infty$); the semantic loss $\mathcal{L}_{\text{semantic}}$ and the learnable threshold $\tau$ naturally drive the model toward compact transmission.
At inference, the budget $B$ can be set to any desired level: the scheduler simply admits cells by descending utility until the budget is met (Sec.~\ref{sec:transmission}).
This allows a single trained model to operate at any point on the bandwidth--accuracy trade-off curve without retraining.

\begin{figure}[t]
  \begin{minipage}[t]{0.45\columnwidth}
    \vspace{0pt}
    \centering
    \includegraphics[width=\linewidth]{./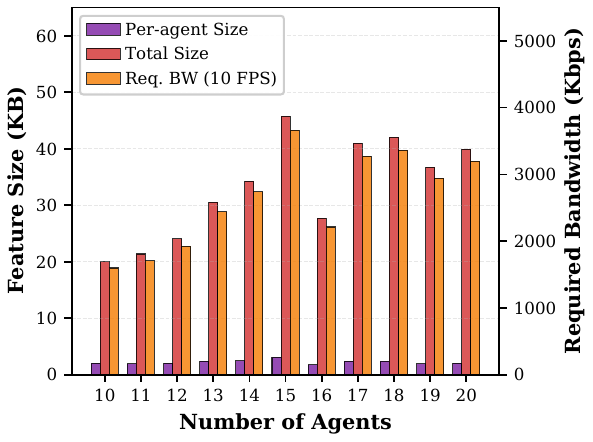}
    \captionof{figure}{\system's total feature size and required bandwidth under 10 FPS on our custom dataset.}
    \label{fig:num_agent_custom}
  \end{minipage}\hfill
  \begin{minipage}[t]{0.52\columnwidth}
    \vspace{0pt}
    \centering
    \setlength{\tabcolsep}{4pt}
    \small
    \begin{tabular}{@{}c ccc@{}}
      \toprule
      $p_{\text{drop}}$ & \textbf{AP@.3} & \textbf{AP@.5} & \textbf{AP@.7} \\
      \midrule
      0.00 & 0.933 & 0.925 & 0.826 \\
      0.05 & 0.930 & 0.922 & 0.822 \\
      0.10 & 0.926 & 0.919 & 0.817 \\
      0.15 & 0.923 & 0.915 & 0.813 \\
      0.20 & 0.919 & 0.911 & 0.808 \\
      \bottomrule
    \end{tabular}
    \captionof{table}{Robustness to packet loss on OPV2V. $p_{\text{drop}}$: per-agent drop probability; ego fallback at affected cells.}
    \label{tab:packet_loss}
  \end{minipage}
\end{figure}

\section{Scaling with More Agents}\label{sec:scale}

To further test \system's ability to scale beyond the 2-5 agents available from the OPV2V and DAIR-V2X dataset Sec.~\ref{sec:exp-setup}, we conducted further experiments on a custom collected dataset. We utilized the CARLA digital driving simulator~\cite{Dosovitskiy17Carla} in conjunction with
the OpenCDA framework~\cite{opencda} for coordinated multi-agent
simulation and data recording. The data was collected following the same format and conventions as the public OPV2V benchmark dataset, including sensor configurations, coordinate systems, and CAV parameters, to ensure compatibility with existing CP models and evaluation pipelines. We set up the scene within CARLA's `Town10HD' map with 10--20 CAVs following realistic traffic rules, at a large multi-lane intersection.

Fig.~\ref{fig:num_agent_custom} presents the result of \system on the custom dataset. We can observe that \system's total data-channel payload does not double when scaling from 10 to 20 agents. It saturates and remains relatively flat when the number of agents is larger than 15, confirming the asymptotic $\mathcal{O}(1)$ data-channel scaling. As more agents join, the collective coverage saturates, the overlap becomes dominant, and the marginal contribution of each new agent becomes smaller.

\section{Robustness to Packet Loss}\label{sec:packet-loss}

In real-world V2X deployments, transmitted feature payloads may be lost due to channel fading, congestion, or collisions at the MAC layer. We study how \system behaves when a non-ego agent's scheduled feature packet is dropped with probability $p_{\text{drop}} \in \{0, 0.05, 0.10, 0.15, 0.20\}$.

\vspace{2pt}\noindent\textbf{Fallback mechanism.}
Under the top-1-per-cell policy, each spatial cell is assigned to exactly one agent. When a non-ego agent's packet is lost, the ego vehicle falls back to its own local feature at the affected cell. Since the ego's feature is always available (no transmission required), this fallback incurs zero additional communication and requires no retraining. Alternatively, the system can opt for Top-$K$ scheduling when the bandwidth is sufficient.

\vspace{2pt}\noindent\textbf{Results.}
Table~\ref{tab:packet_loss} reports the results on OPV2V. \system exhibits remarkable resilience: even at a 20\% packet drop rate, AP@0.5 decreases by only 1.4 percentage points (from 0.925 to 0.911), and AP@0.7 by 1.8 points (from 0.826 to 0.808). The degradation is nearly linear in $p_{\text{drop}}$, confirming that there is no catastrophic failure mode. This resilience stems from two properties of \system's design: (i)~the top-1 scheduler assigns each cell to the \emph{single highest-utility} agent, so the dropped feature is typically from a non-ego agent that only marginally outperforms the ego's own observation at that cell (due to less occlusion or high-quality observation); and (ii)~the ego's local feature, while not the top-ranked choice, still provides a semantically meaningful representation for non-occluded objects.

\vspace{2pt}\noindent\textbf{Top-$K$ as an alternative.}
An alternative strategy for handling packet loss, especially for enhancing the detection of occluded objects, is to relax the top-1 policy to a Top-$K$ policy, where the scheduler pre-selects $K$ candidate agents per cell and falls back to the next-best candidate upon packet loss. However, Top-2 scheduling can incur additional transmission overhead. Therefore, the ego-fallback mechanism is preferable: it achieves the full accuracy benefit of Top-1 scheduling when packets arrive, and gracefully falls back to the ego's own representation otherwise---all without increasing the nominal bandwidth or requiring changes to the scheduling policy.

\section{Qualitative Results}\label{sec:qual}

\begin{figure*}[h]
  \centering
  \setlength{\tabcolsep}{1pt}
  \renewcommand{\arraystretch}{0.5}
  \begin{tabular}{ccc}
    \textbf{\system} & \textbf{ERMVP} & \textbf{Where2comm} \\[2pt]
    \includegraphics[width=0.325\textwidth]{./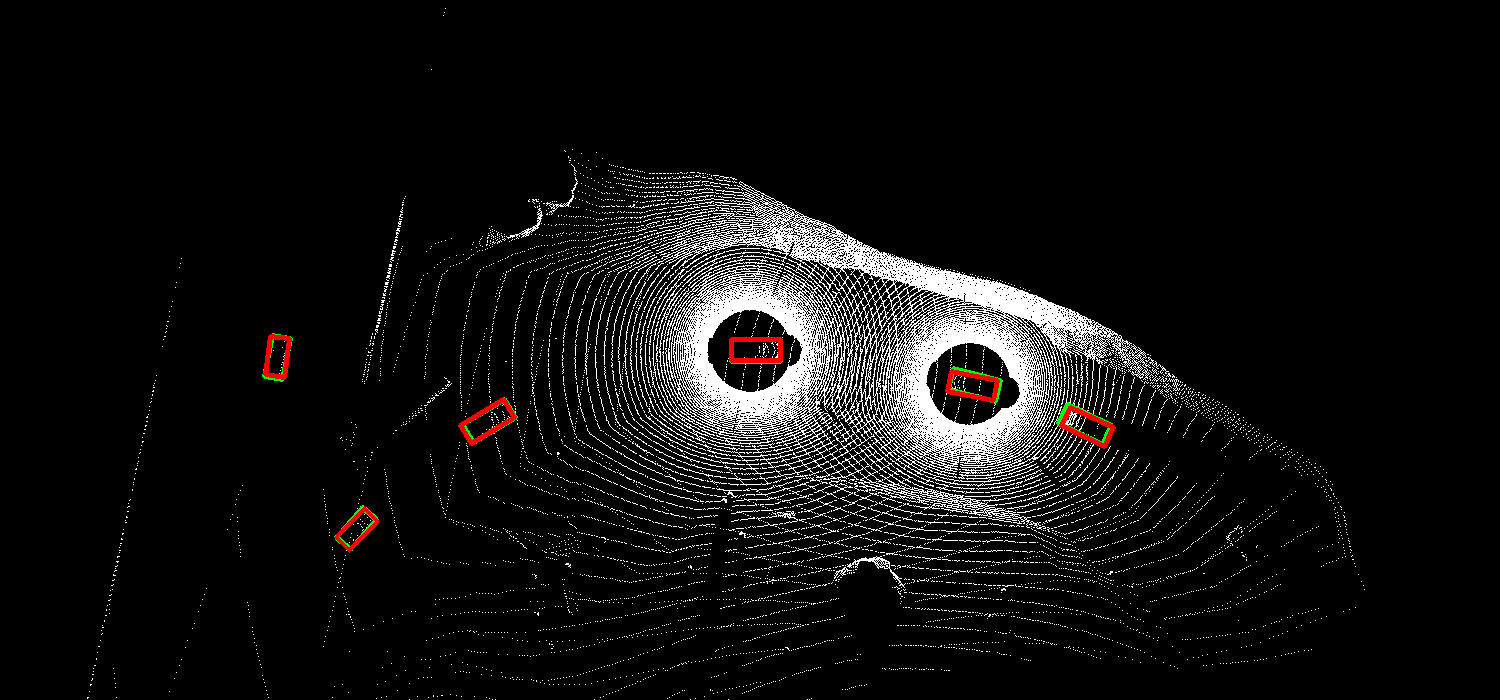} &
    \includegraphics[width=0.325\textwidth]{./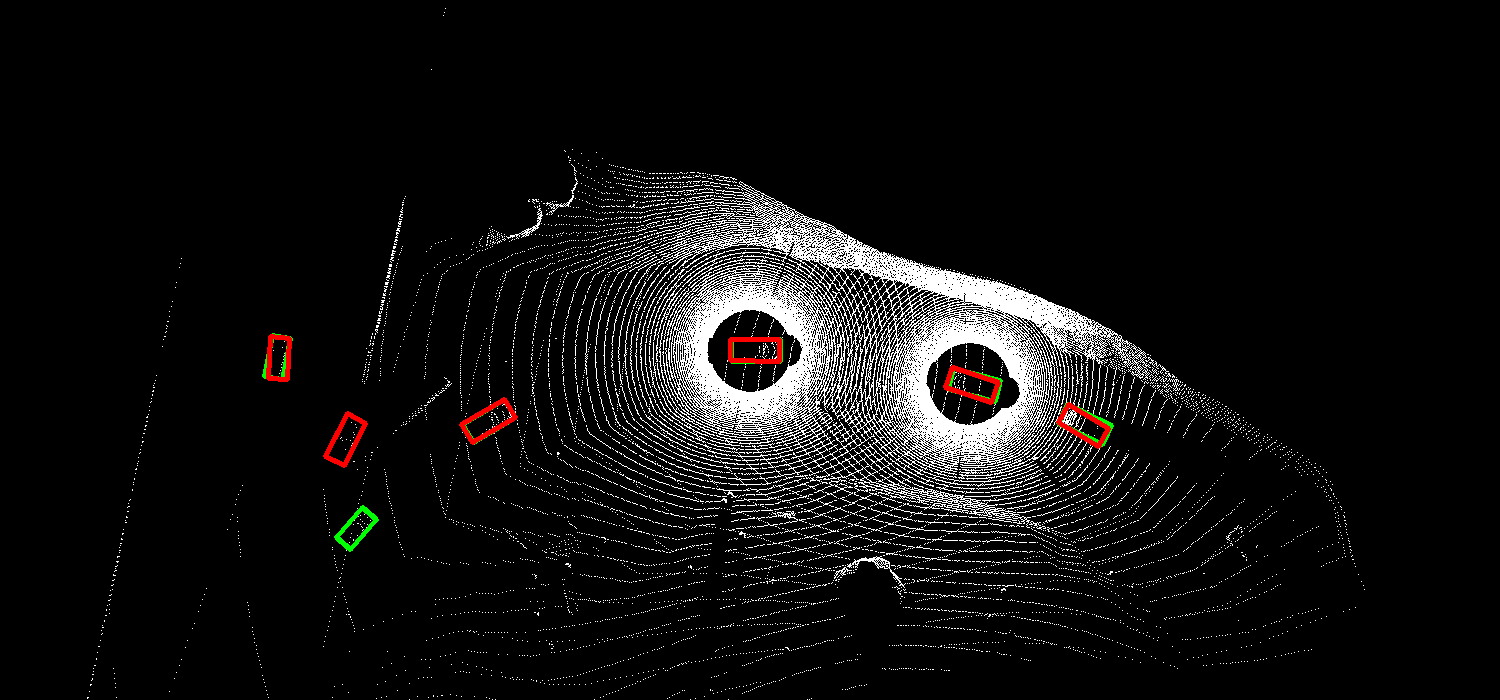} &
    \includegraphics[width=0.325\textwidth]{./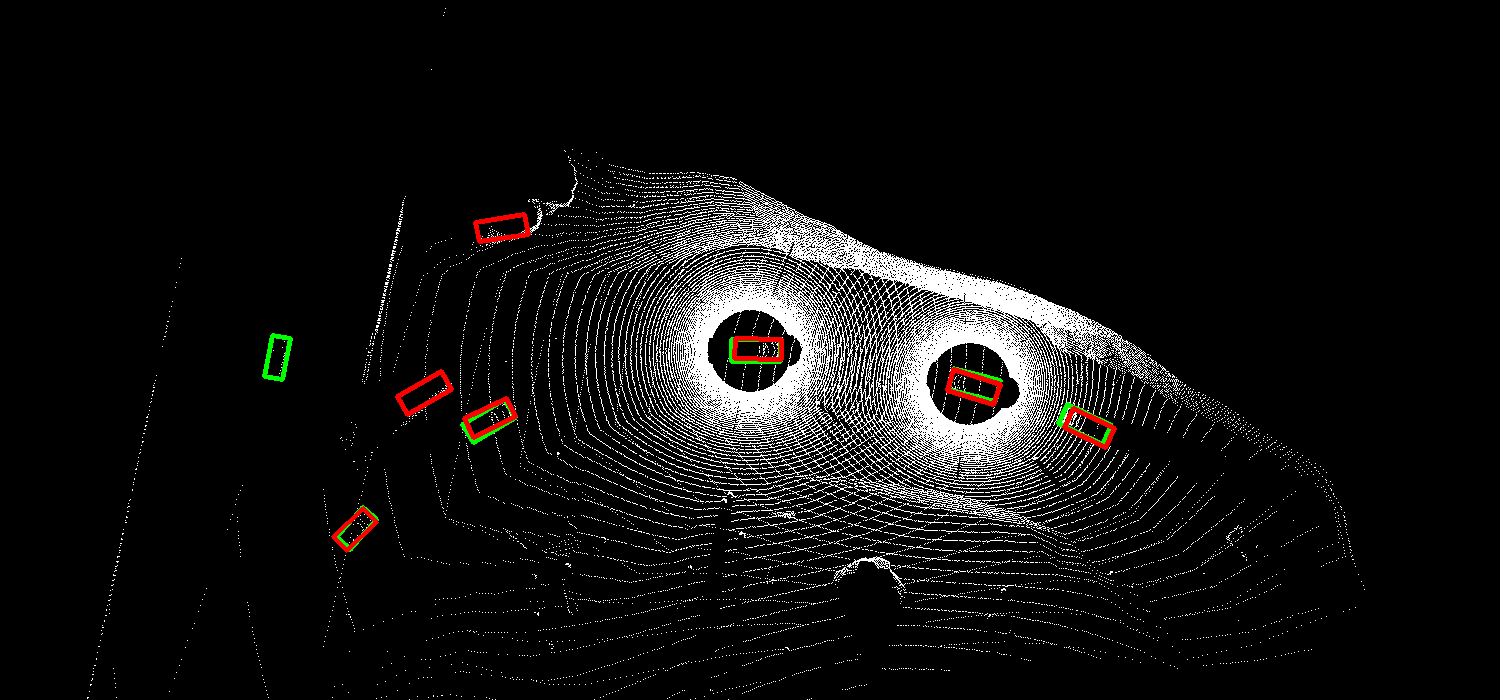} \\[2pt]
      \includegraphics[width=0.325\textwidth]{./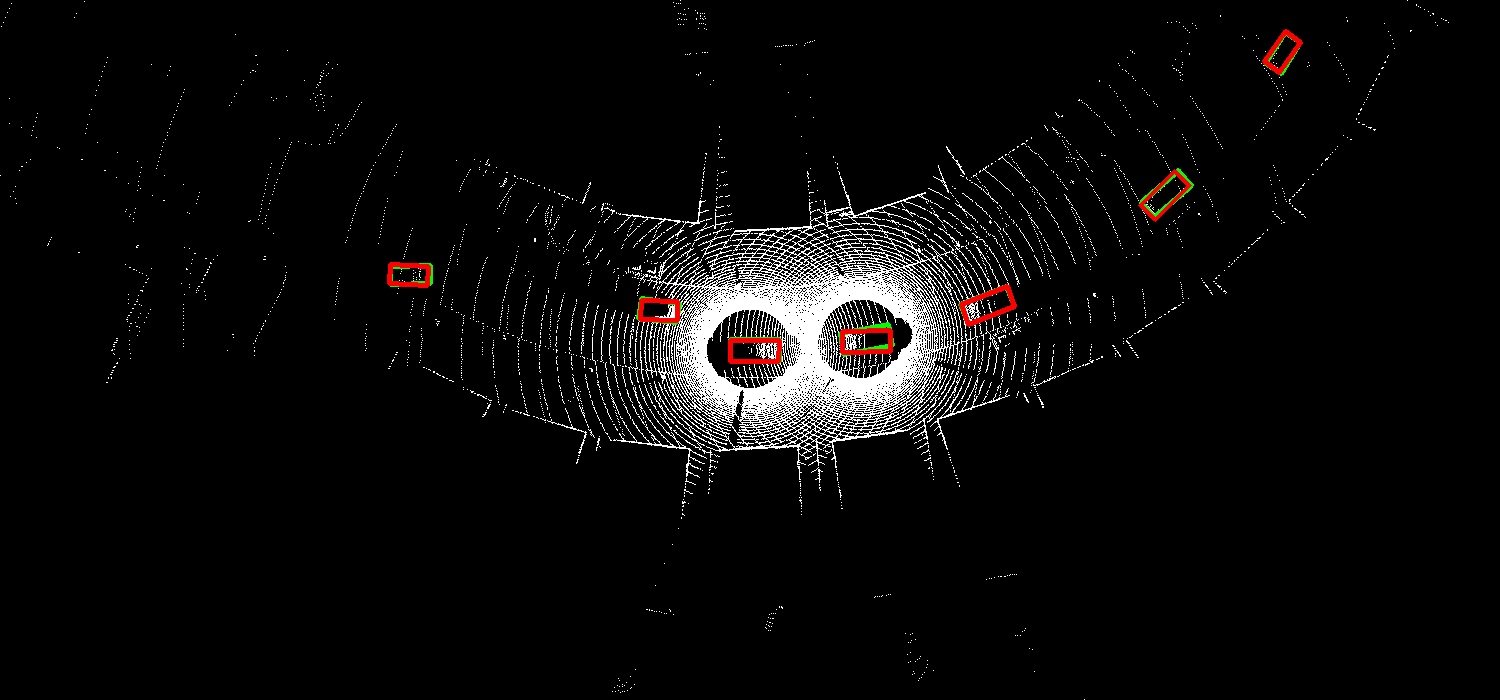} &
    \includegraphics[width=0.325\textwidth]{./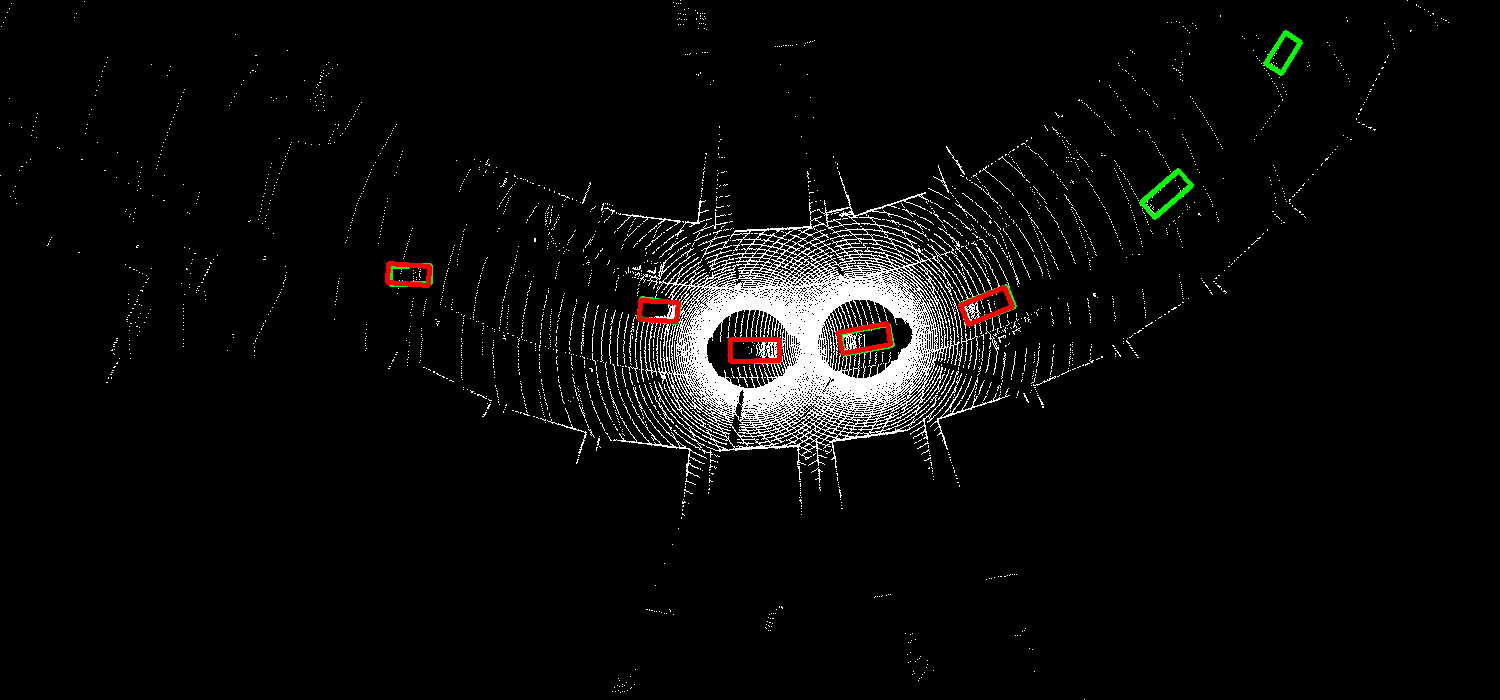} &
    \includegraphics[width=0.325\textwidth]{./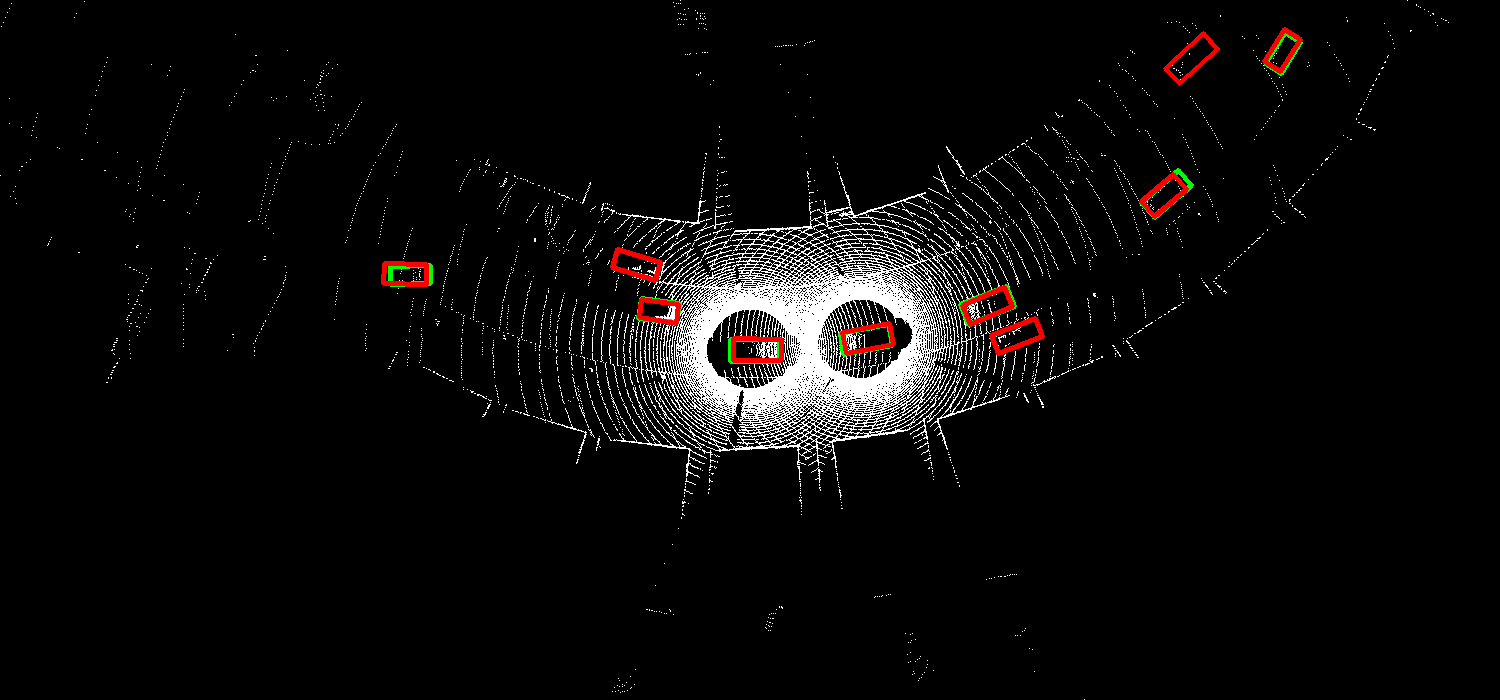} \\[2pt]
    \includegraphics[width=0.325\textwidth]{./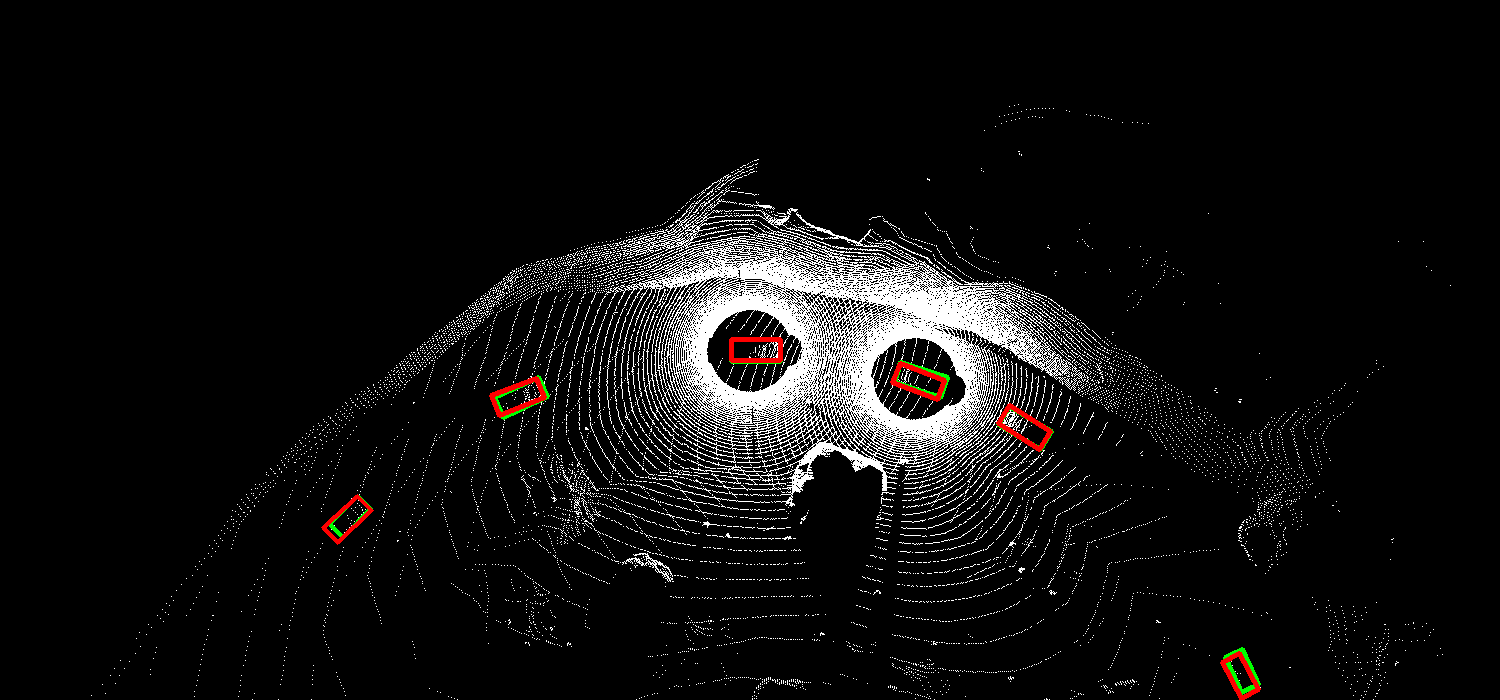} &
    \includegraphics[width=0.325\textwidth]{./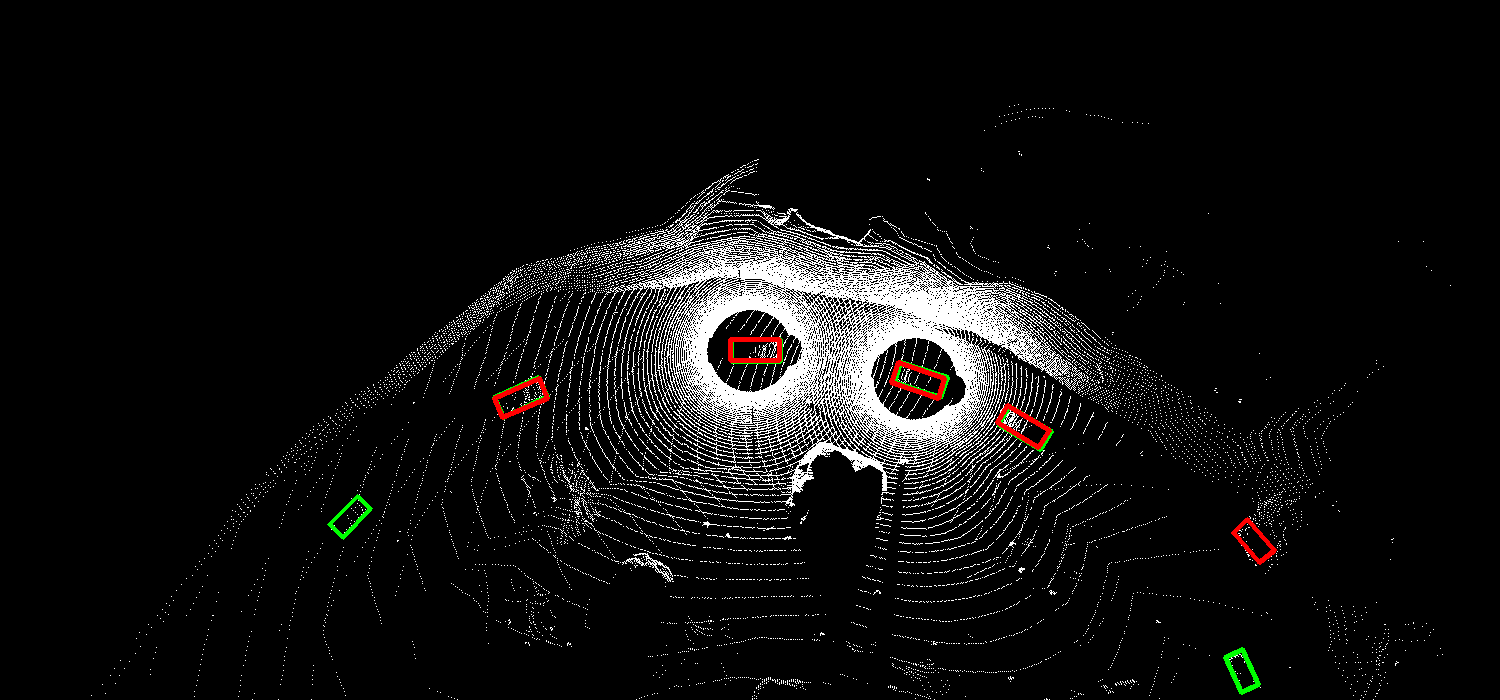} &
    \includegraphics[width=0.325\textwidth]{./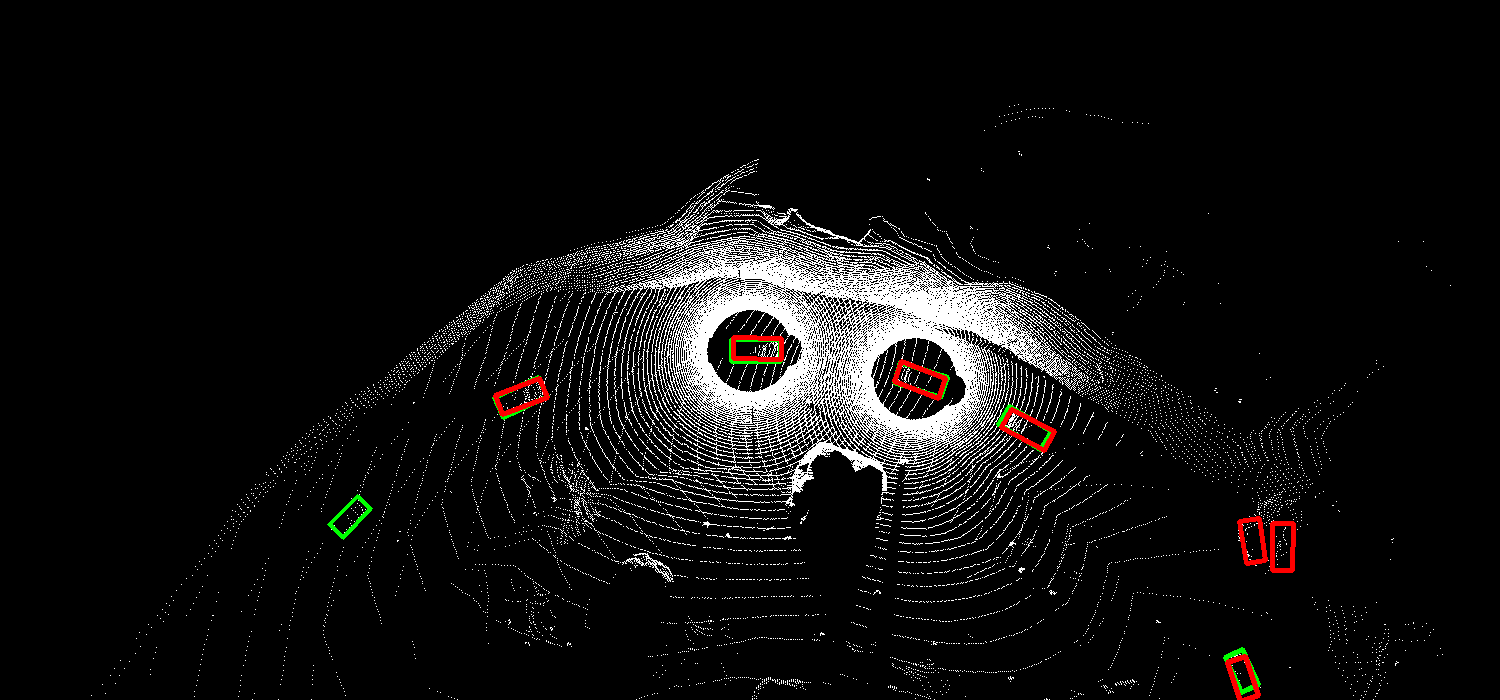} \\[2pt]
    \includegraphics[width=0.325\textwidth]{./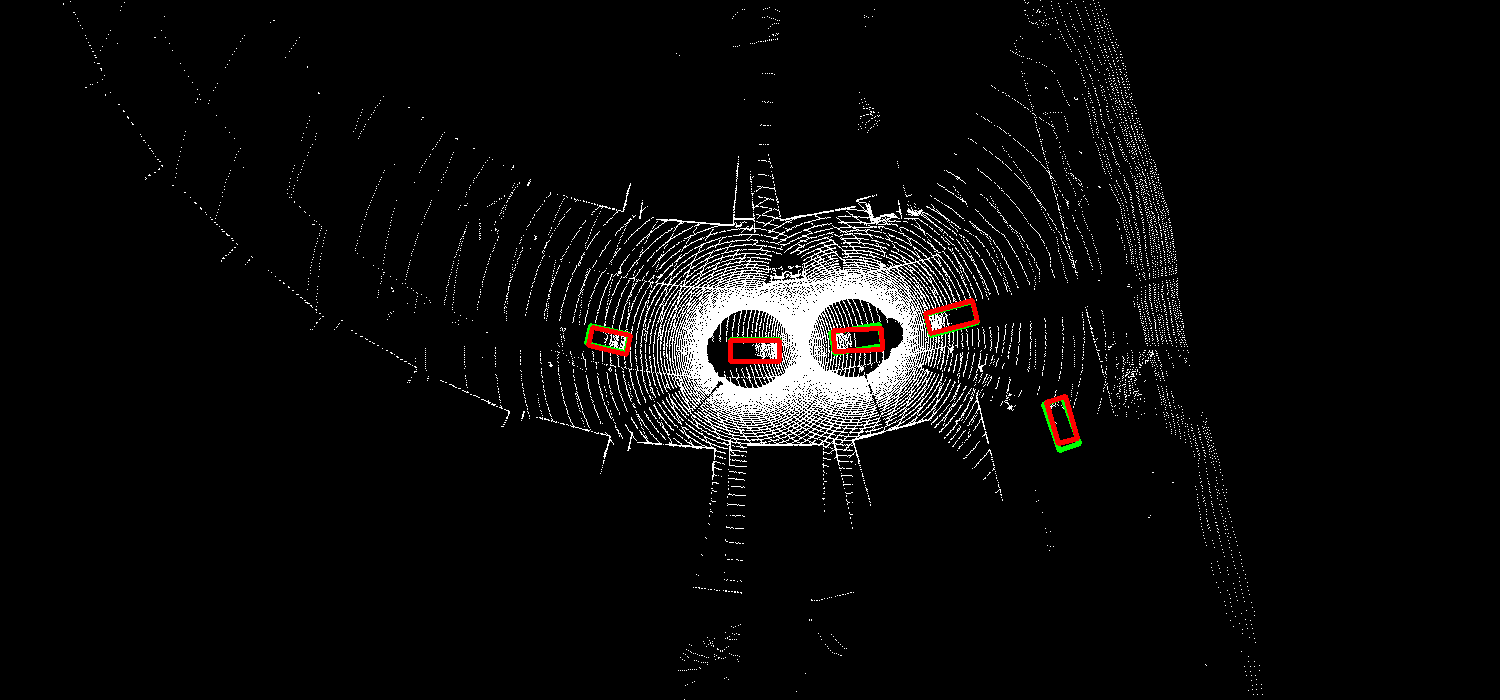} &
    \includegraphics[width=0.325\textwidth]{./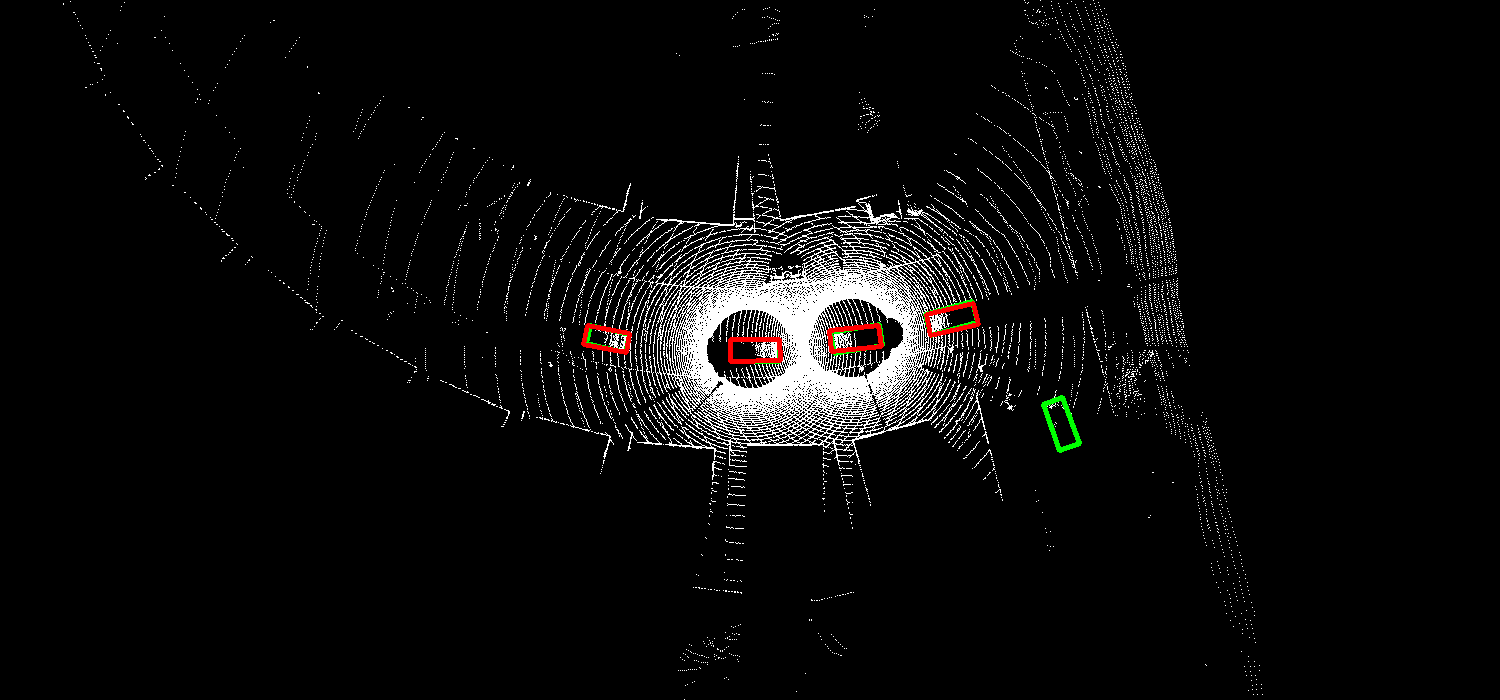} &
    \includegraphics[width=0.325\textwidth]{./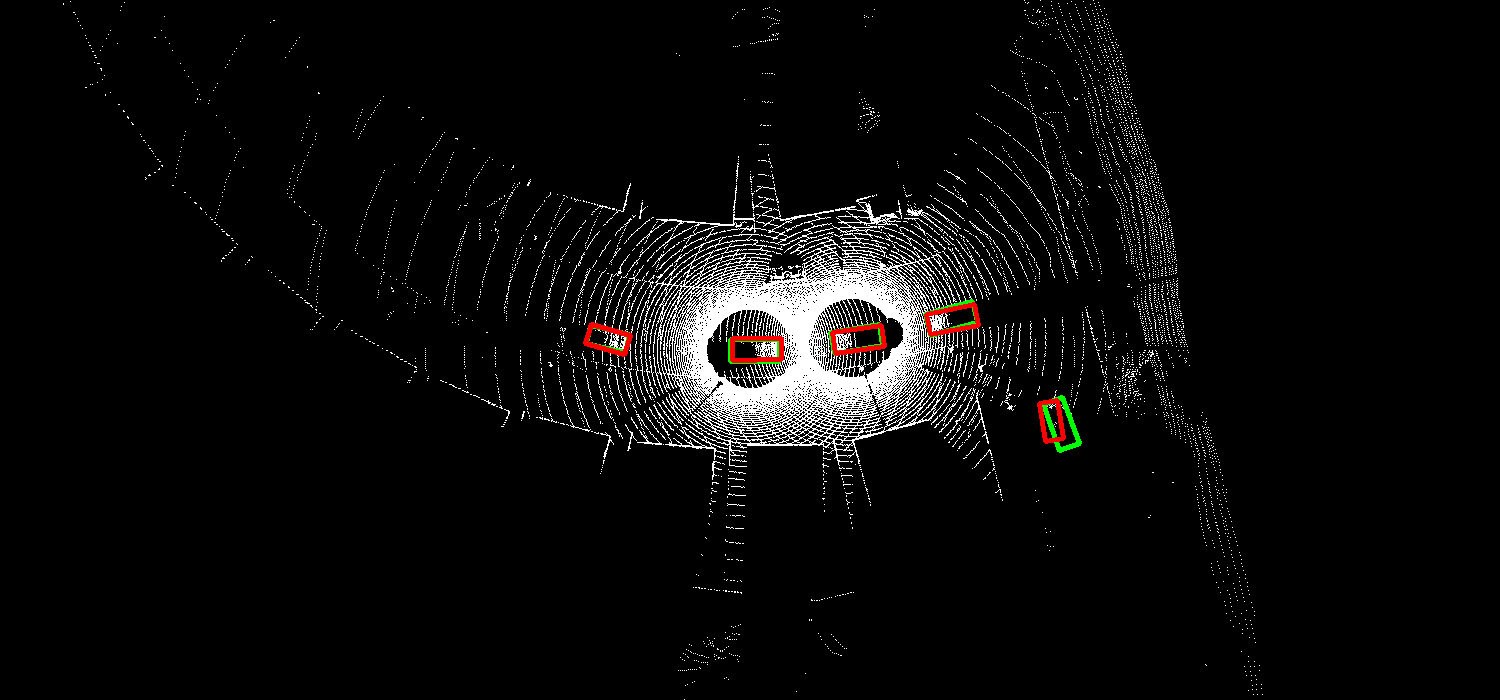} \\
  \end{tabular}
  \caption{Qualitative comparison of 3D detection results on OPV2V. Each row shows one scene; columns correspond to \system, ERMVP~\cite{Zhang_2024_CVPR}, and Where2comm~\cite{hu2022where2comm}. \textcolor{green}{Green}: ground truth; \textcolor{red}{Red}: predictions. \system produces tighter bounding boxes with fewer missed detections, especially for distant or partially occluded vehicles.}
  \label{fig:qual_vis}
\end{figure*}

\end{document}